\newtheorem{thm}{Theorem}[section]
\newtheorem{cor}[thm]{Corollary}
\newtheorem{lem}[thm]{Lemma}
\newtheorem{dee}[thm]{Definition}
\newtheorem{rem}[thm]{Remark}
\title{Pricing Strategies for Different Accuracy Models from the Same Dataset Based on Generalized Hotelling's Law}
\author[1]{Jie Liu\footnote{Equal contribution}}
\author[2]{Tao Feng\footnote{Equal contribution}}
\author[2]{Peizheng Wang}
\author[2]{Chao Wu\footnote{Corresponding author}}
\affil[1]{Zhejiang Shuren University}
\affil[2]{Zhejiang University}
\date{}
\begin{document}

\begin{titlepage}

\maketitle
\begin{abstract}
We consider a scenario where a seller possesses a dataset $D$ and trains it into models of varying accuracies for sale in the market. Due to the reproducibility of data, the dataset can be reused to train models with different accuracies, and the training cost is independent of the sales volume. These two characteristics lead to fundamental differences between the data trading market and traditional trading markets. The introduction of different models into the market inevitably gives rise to competition. However, due to the varying accuracies of these models, traditional multi-oligopoly games are not applicable. We consider a generalized Hotelling's law, where the accuracy of the models is abstracted as distance. Buyers choose to purchase models based on a trade-off between accuracy and price, while sellers determine their pricing strategies based on the market's demand. We present two pricing strategies: static pricing strategy and dynamic pricing strategy, and we focus on the static pricing strategy. We propose static pricing mechanisms based on various market conditions and provide an example. Finally, we demonstrate that our pricing strategy remains robust in the context of incomplete information games.
\end{abstract}
\clearpage
\setcounter{tocdepth}{2} 
\tableofcontents

\end{titlepage}

\section{Introduction}
Data training and trading are receiving increasing attention\cite{agarwal2019marketplace,pei2020survey,zhang2023survey}. Industries across the board have focused their attention on how to maximize the extraction of information inherent in data through model training. The value of this information is precisely the manifestation of the value of data as a production factor itself.

We focuses on considering trained models as products in this paper. Trading models rather than data itself inherently reduces the risk of data privacy breaches. Unlike physical materials, data possesses reproducibility. In \cite{agarwal2019marketplace,chen2022selling}, the reproducibility of data is identified as a detrimental factor, and strategies are explored to address the challenge of diminished data value arising from this reproducibility. However, when data is utilized solely as raw material for training models rather than being directly sold in the market, we observe that due to this characteristic, we can retrain the data to generate different models and sell them. Also, from the market's perspective, different models can be trained using the same data set, thereby satisfying varying market demands. This results in traditional economic principles, such as opportunity cost, acquiring different connotations in the context of data transactions. 

Despite this, there are still many similarities between data trading and traditional market transactions. For instance, when different model products are sold in the market, competition will inevitably arise. There has been lots of research regarding game theory in data valuation and data market transactions, such as \cite{jia2019towards,karlavs2022data,huang2023evaluating,yu2023swdpm}. We focus on considering the application of generalized Hotelling's law in the data trading market.

    The traditional Hotelling's law is a spatial competition strategy, and it describes a scenario where two firms compete by setting prices and choosing locations along a linear product space\cite{brown1989retail}. Based on Hotelling's law, there have been numerous studies on location selection and pricing, see \cite{guo2015hotelling,10.1145/3292522.3326020}. {\bf Essentially, Hotelling's law reflects a trade-off between two independent variables of buyers based on their needs and preferences in order to make an optimal choice.} In this paper, we quantify model accuracy as the distance between multiple buyers and sellers. Buyers weigh the unit price and accuracy of different models based on their own utility functions and then choose the model that maximizes their utility.

We consider a scenario where a single buyer chooses between these two models at first. In Figure \ref{datahotelling},  the buyer naturally prefers a model with $100 \%$ accuracy, meaning the buyer’s location corresponds to the location of the $100 \%$ accurate model. However, such accuracy is impossible to achieve. Therefore, the sellers can only strive to maximize the model’s accuracy, i.e., to be “closer” to the buyer in terms of “location”. On the other hand, the buyer compares the utility value between $T_1$ and $T_2$ and chooses the model with higher utility.

Besides replicability, data itself also has privacy protection features, which often lead to incomplete information on both the buyer's and seller's sides. In \cite{shokri2012protecting}, the authors explore how to ensure the robustness of the results with incomplete information. In Section \ref{sec:incinf}, we also demonstrate that under incomplete information game, the algorithm we provide remains robust and the error bound can be calculated.

{\bf Related works} 

In \cite{agarwal2019marketplace}, the authors introduce an effective data trading mechanism, which gives the strategies for matching buyers and sellers as well as how the market interacts. Additionally, the authors present a robust-to-replication algorithm, ensuring that even if the traded data is replicated, it will not affect its market pricing. In \cite{chen2022selling}, the authors introduce another data trading mechanism to prevent the devaluation of the seller’s data after replication. This method involves the seller initially presenting a portion of the data to the buyer as sample data along with a selling price. Then the buyer conducts Bayesian inference to predict the accuracy and quality of the data via the corresponding prior knowledge, thus determining whether to purchase the data at the offered price or not. In terms of privacy protection, \cite{amiri2023fundamentals} presents a trading mechanism that not only protects data privacy but also allows the seller to demonstrate the quality of the data to the buyer. In this paper, the quality of the data depends on the relevance of the sold data to the buyer’s needs. In \cite{ravindranath2023data}, a deep learning-based approach for designing data market mechanisms is presented, targeting more complex transaction scenarios. It seeks to identify optimal market mechanism designs by examining two settings: one involving a single buyer and another involving multiple buyers. In \cite{10.1145/3677127}, a data market pricing scheme based on the combination of machine learning and the Stackelberg game is also proposed, and the optimization objective of this pricing scheme is to maximize the buyers' benefits. In \cite{egan2024package}, a scenario of repeatedly selling data in an auction setting is given, and the corresponding winner determination problem is investigated. Regarding research on direct model pricing, \cite{10.1145/3299869.3300078} studies a machine learning-based model pricing mechanism, with a particular focus on strategies to avoid arbitrage.

\section{Preliminary: Data chain structure and utility functions}\label{sec:modeling}
In this paper, we consider a scenario where sellers train models with different accuracies using the dataset $D$ and sell them in the market. We arrange these models as a data chain, see Figure ~\ref{fig3}, based on their accuracies, from lowest to highest (or vice versa). We assume in this paper that accuracy $A_i$ of the model $T_i$ is a strictly monotonically increasing known function of training cost $c_i$. We aim to maximize the profit of models across the entire data chain by arranging the training costs and unit prices $p_i$. Note that the cost of model training is independent of sales volume, and there exists competition among models of different accuracies. Let $\mathbb{S}:s_1\to s_2\to\cdots\to s_n$ be a data chain and $T_i$'s are the corresponding trained models, the goal is to maximize $\underset{i=1}{\stackrel{n}{\sum}}\max\{r_i-c_i,0\}$ under the constraints of $c_i$'s where $r_i$ is the revenue from sales of $T_i$.

\begin{figure}
    \centering
    \begin{tikzpicture}
    \draw[->] (0,0) -- (7,0);
    \node[anchor=east, yshift=5pt] at (9,0) {accuracy};
    \filldraw (2,0) circle (1pt) node[above=3pt] {$A_1$};
    \filldraw (5,0) circle (1pt) node[above=3pt] {$A_2$};
    \filldraw (2,0) circle (1pt) node[below=3pt] {$T_1$(with price $p_1$)};
    \filldraw (5,0) circle (1pt) node[below=3pt] {$T_2$(with price $p_2$)};
    \filldraw (7,0) circle (1pt) node[above=3pt]
    {$1$};
    \filldraw (7,0) circle (1pt) node[below=3pt] {buyer};
\end{tikzpicture}
    \caption{Generalized Hotelling's law}
    \label{datahotelling}
\end{figure}

\begin{figure}[h]
\centering
\includegraphics[scale=0.3]{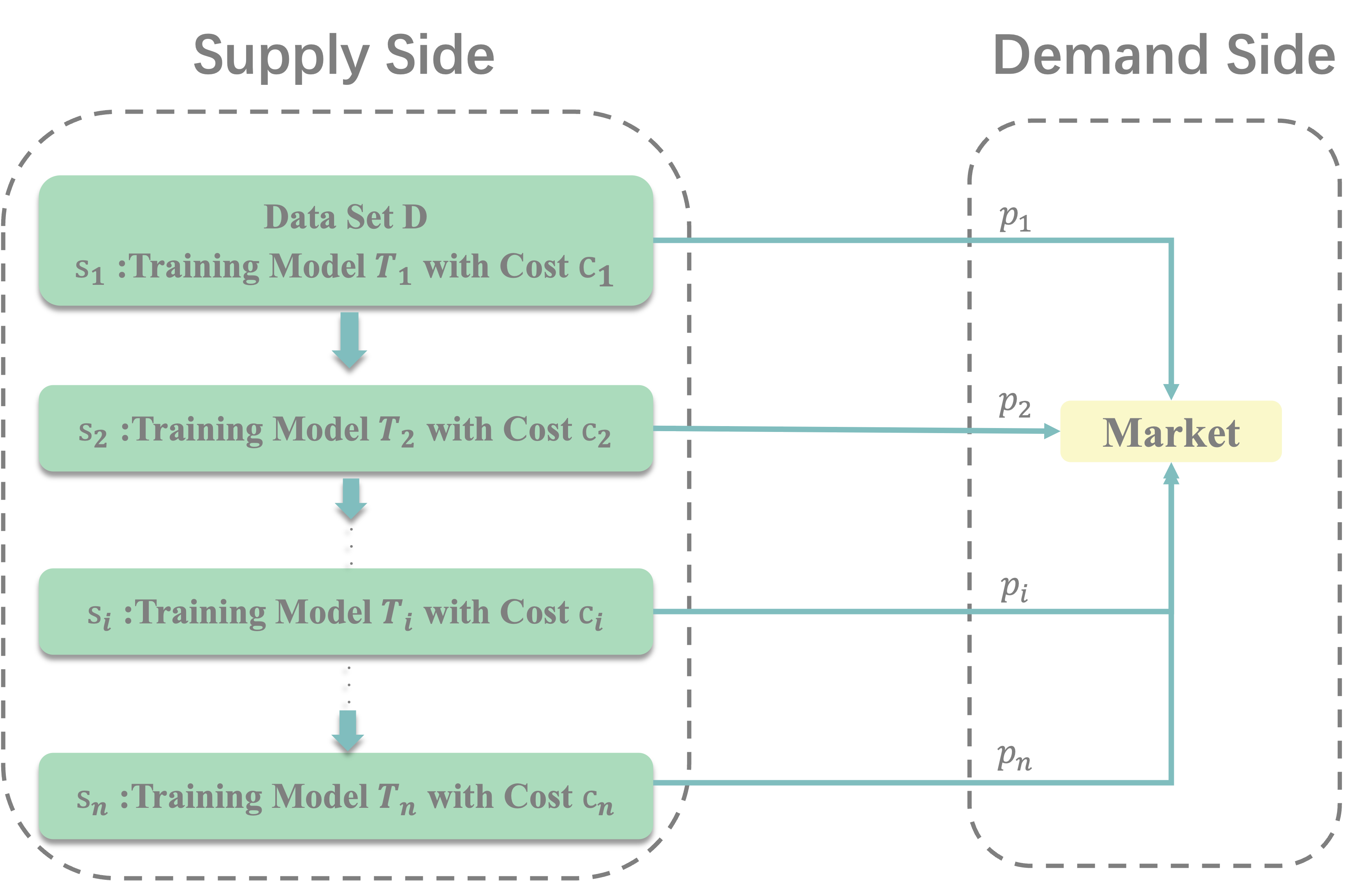}
\caption{Data Chain (Dataset D cannot be directly sold to the market) } 
\vspace{-10pt}
\label{fig3}
\end{figure}

An obvious issue in solving this optimization problem is how to determine the unit price and sales volume for each model. Generally, the seller's sales volume is related to market demand, expressed as utility functions once the unit price is set. 

We denote a class of continuous functions $\mathfrak{B}=(b_1,b_2,\cdots,b_n)$ as utility functions for all buyers, where each index corresponds to the model $T_i$ with cost $c_i$ and the precision $A_i$ determined by $c_i$. The independent variables of $b_i$'s include the model price $p$ and the buyers' minimum accuracy requirement $a$. The buyer will choose the model only if the utility function value is positive and the model's accuracy exceeds their own accuracy requirements. If multiple models have positive utility function values, the buyer will choose the model with the highest function value. We denote $P$ as the price cap and we define $\underline{A}$ and $\overline{A}$ as the lower and upper bounds of the model's accuracy, respectively. We assume that $b_i$'s have the following properties:
\begin{itemize}
\item For any two models $T_i$ and $T_j$ with $i<j$ and fixed $p$ and $a$, then $b_i(p,a)<b_j(p,a)$.
\item All $b_i$'s are strictly monotonically decreasing of $p$. 
\item The function $b_i$ is strictly monotonically increasing of $a\in[\underline{A},A_i]$.
\end{itemize}

The first property implies that, for buyers with a fixed accuracy requirement, a model with higher accuracy corresponds to higher utility at the same price point. The third property indicates that, when a model is priced fixedly, buyers with higher accuracy needs will derive higher utilities from the model.

We define $D_{i,p}=\{a\in[\underline{A},A_i]|b_i(p,a)\geq0\}$ and we have

\begin{dee}
    We name the class of functions $\mathfrak{B}=(b_1,b_2,\cdots,b_n)$ are accuracy-compatible if for any $i<j$ and fixed $p,p'$, we denote $B_{ij}(a)=b_i(p,a)-b_j(p',a)$ and we have
    \begin{itemize}
        \item $B_{ij}(a)>0$ or $B_{ij}(a)<0$ for $a\in D_{i,p}\bigcap D_{j,p'}$.
        \item If there exists $a\in D_{i,p}\bigcap D_{j,p'}$ such that $B_{ij}(a)=0$, then  $B_{ij}(a_+)<0$ for $D_{i,p}\bigcap D_{j,p'}\ni a_+>a$ and $B_{ij}(a_-)>0$ for $D_{i,p}\bigcap D_{j,p'}\ni a_-<a$.
    \end{itemize}
\end{dee}
\begin{rem}
    The motivation behind defining the accuracy-compatibility of utility functions stems from the observation that, when the prices of two models fall within a reasonable range, i.e., the corresponding utility function values are non-negative, and then these two models are in competition. In this case, the outcome of this competition consistently sees buyers with high accuracy demands opting for the model that offers high accuracy at a higher price. In contrast, buyers with low accuracy demands make the opposite choice. This definition aligns with realistic scenarios.
\end{rem}


\section{Static pricing strategy: generalized Hotelling's law}\label{sec:stathote}
We reiterate that in this paper, we consider a seller possesses a dataset $D$ and makes a profit by training this dataset into multiple models of different accuracies and selling them. We assume that, for the seller, the relationship between the training cost and the corresponding model accuracy is a known, strictly monotonically increasing function. Also, for the buyers, the distribution of their minimum required accuracies is known, which we denote as $\lambda(a)$, and continuous. Moreover, for any buyer with a known minimum required accuracy, the utility function for models of a specified accuracy and price is also known, and the utility functions corresponding to all accuracies are accuracy-compatible.

We give the algorithm steps to maximize the seller's profit,
\begin{enumerate}
    \item The seller lists all the possible cost allocations $c_1,c_2,\cdots,c_n$ where $c_i<c_{i+1}$.
    \item The seller optimizes the corresponding pricing strategy to maximize the profit based on the fixed cost allocations.
    \item The seller compares all the profits and chooses the cost allocation corresponding to the highest profit.
\end{enumerate}

We focus primarily on the second step in this section. We consider a static pricing strategy, where upstream nodes set prices first, and downstream nodes set prices based on the distribution of minimum accuracy requirements and the upstream nodes' pricing strategies to maximize the profit of this model and the models of the upstream nodes. Once downstream nodes have completed their pricing, upstream nodes do not adjust their prices further. We use generalized Hotelling's law to describe the market competition between adjacent nodes. Specifically, we have
\begin{enumerate}
    \item We calculate $p_1$ to maximize the profit of $T_1$.
    \item Fixed $p_1$ and we calculate $p_2$ to maximize the profit of $T_1$ and $T_2$.
    \item We iterate to calculate $p_i$, while keeping $p_1, p_2, \cdots, p_{i-1}$ invariant, until $i=n$.
\end{enumerate}

\subsection{Two nodes example}
When only model $T_1$ is sold in the market, the optimal pricing strategy which is based on the buyers’ utility function and minimum accuracy requirements is
\begin{equation}\label{i1case}
    p_1=\underset{p}{argmax}\quad p\int_{a'_1}^{A_1}\lambda(a)da.
\end{equation}
where $b_1(p,a'_1)=0$. Note that $A_1$ is the function of $c_1$ and utility function $b_1$ is denoted by $A_1$. We rename the market allocation of $T_1$ as $(a_{1,-},a_{1,+}]$.

We consider the situation in which both $T_1$ and $T_2$ are sold in the market, and the corresponding utility functions $b_1$ and $b_2$ are accuracy-compatible. Note that due to differences in model accuracy and price, their utility function values vary, leading to differing competitive environments when these models are sold into the market. Consequently, we categorize and discuss all possible competitive environments and present corresponding optimization plans along with the constraints on accuracy and price for models associated with these strategies. We denote $b_2(p,a'_2)=0$ and $b_1(p_1,a_2^{\ast})=b_2(p,a_2^{\ast})\geq0$ if $a_2^\ast$ is attainable. We classify and discuss three possible market allocation scenarios:

\begin{itemize}
    \item If the $T_2$'s market completely covers the $T_1$'s market, then we have
    \begin{equation}\label{i2case1}
        p_2=\underset{p}{argmax}\,p\int_{a'_2}^{A_2}\lambda(a)da,\quad
        a'_2< a_{1,-}.
\end{equation}
Note that $B_{12}<0$ in $(a_{1,-},a_{1,+}]$ because of accuracy-compatible of $b_i$'s. It leads to $a_2^\ast$ not attainable.
    \item If there is market competition between $T_2$ and $T_1$, then we have
    \begin{equation}\label{i2case2}
        p_2=\underset{p}{argmax}\,p_1\int_{a_{1,-}}^{a_2^{\ast}}\lambda(a)da+p\int_{a_2^{\ast}}^{A_2}\lambda(a)da,\quad a'_2\geq a_{1,-}\quad 
        a_{1,-}\leq a_2^\ast\leq a_{1,+}.
\end{equation}
    \item If there is no market competition between $T_1$ and $T_2$, then we have
    \begin{equation}\label{i2case3}
        p_2=\underset{p}{argmax}\,p\int_{A_1}^{A_2}\lambda(a)da,\quad
        a_{1,-}<a'_2\leq A_1,
\end{equation}
or
\begin{equation}\label{i2case4}
        p_2=\underset{p}{argmax}\,p\int_{a'_2}^{A_2}\lambda(a)da,\quad
        A_1<a'_2<A_2.
\end{equation}
\end{itemize}

It is clear that the market allocation of each model is connected, and we rename the new market allocation of 
$T_1$ and $T_2$ as $(a_{1,-},a_{1,+}]$ and $(a_{2,-},a_{2,+}]$. Let
\begin{equation}
        \mathfrak{b}_2(a)=\begin{cases}
b_j(p_j,a) & \text{if}\quad a\in(a_{j,-},a_{j,+}],\quad j=1,2, \\
0 & \text{otherwise,}
\end{cases}
    \end{equation}
we show the market allocation that corresponds to these situations, see Figure \ref{fig:i2case}.

\begin{figure}[htbp]
    \centering
    \begin{minipage}{0.48\textwidth}
        \centering
        \begin{subfigure}[b]{0.48\textwidth}
            \centering
            \begin{tikzpicture}[scale=0.75]
                \draw[->] (0,0) -- (4,0) node[right] {$a$};
                \draw[->] (0,-1) -- (0,2) node[above] {$\mathfrak{b}_2(a)$}; 
                \coordinate (a) at (0.5,0);
                \coordinate (b) at (1.5,0);
                \coordinate (c) at (2.5,0);
                \coordinate (d) at (3.5,0);
                \node[below] at (a) {$a_{2,-}$};
                \node[below] at (d) {$a_{2,+}$};
                \draw[thick] (a) to (3.5,2); 
                \fill[blue, opacity=0.3] (0.5,-0.1) rectangle (3.5,0.1);
            \end{tikzpicture}
            \caption{Allocation of \eqref{i2case1}}
            \label{fig:mono1}
        \end{subfigure}
        \hspace{0.04\textwidth}
        \begin{subfigure}[b]{0.48\textwidth}
            \centering
            \begin{tikzpicture}[scale=0.75]
                \draw[->] (0,0) -- (4,0) node[right] {$a$};
                \draw[->] (0,-1) -- (0,2) node[above] {$\mathfrak{b}_2(a)$};
                \coordinate (a) at (0.5,0);
                \coordinate (b) at (1.5,0);
                \coordinate (c) at (3.5,0);
                \node[below] at (a) {$a_{1,-}$};
                \node[below] at (b) {$a_{1,+}$($a_{2,-}$)};
                \node[below] at (c) {$a_{2,+}$};
                \draw[thick] (a) to (1.5,0.5);
                \draw[thick] (1.5,0.5) to (3.5,2.5);
                \fill[red, opacity=0.3] (0.5,-0.1) rectangle (1.5,0.1);
                \fill[blue, opacity=0.3] (1.5,-0.1) rectangle (3.5,0.1);
            \end{tikzpicture}
            \caption{Allocation of \eqref{i2case2}}
            \label{fig:mono2}
        \end{subfigure}
    \end{minipage}
    \begin{minipage}{0.48\textwidth}
        \centering
        \begin{subfigure}[b]{0.48\textwidth}
            \centering
            \begin{tikzpicture}[scale=0.75]
                \draw[->] (0,0) -- (4,0) node[right] {$a$};
                \draw[->] (0,-1) -- (0,2) node[above] {$\mathfrak{b}_2(a)$};
                \coordinate (a) at (0.5,0);
                \coordinate (b) at (1.5,0);
                \coordinate (c) at (3.5,0);
                \node[below] at (a) {$a_{1,-}$};
                \node[below] at (b) {$a_{1,+}$($a_{2,-}$)};
                \node[below] at (c) {$a_{2,+}$};
                \draw[thick] (a) to (1.5,1); 
                \draw[thick] (1.5,0.5) to (3.5,3);
                \fill[red, opacity=0.3] (0.5,-0.1) rectangle (1.5,0.1);
                \fill[blue, opacity=0.3] (1.5,-0.1) rectangle (3.5,0.1);
            \end{tikzpicture}
            \caption{Allocation of \eqref{i2case3}}
            \label{fig:mono3}
        \end{subfigure}
        \hspace{0.04\textwidth}
        \begin{subfigure}[b]{0.48\textwidth}
            \centering
            \begin{tikzpicture}[scale=0.75]
                \draw[->] (0,0) -- (4,0) node[right] {$a$};
                \draw[->] (0,-1) -- (0,2) node[above] {$\mathfrak{b}_2(a)$};
                \coordinate (a) at (0.5,0);
                \coordinate (b) at (1.5,0);
                \node[below] at (a) {$a_{1,-}$};
                \node[below] at (b) {$a_{1,+}$};
                \draw[thick] (a) to (1.5,1); 
                \draw[thick] (2,0) to (3.5,3);
                \fill[red, opacity=0.3] (0.5,-0.1) rectangle (1.5,0.1);
                \fill[blue, opacity=0.3] (2,-0.1) rectangle (3.5,0.1);
                \coordinate (c) at (2,0);
                \node[below] at (c) {$a_{2,-}$};
                \coordinate (d) at (3.5,0);
                \node[below] at (d) {$a_{2,+}$};
            \end{tikzpicture}
            \caption{Allocation of \eqref{i2case4}}
            \label{fig:mono4}
        \end{subfigure}
    \end{minipage}
    \caption{Market allocations(red is for $T_1$ and blue is for $T_2$)}
    \label{fig:i2case}
\end{figure}

\subsection{General situation}\label{subsec:stagen}
In this subsection, we prove the main result by induction. For a data chain length of $2$, we have the following observations
\begin{itemize}
    \item The market allocation of each model is connected and $\mathfrak{b}_2$ is continuous almost everywhere. Moreover, we have $a_{1,-}<a_{1,+}\leq a_{2,-}<a_{2,+}$.
    \item If $\mathfrak{b}_2$ is not continuous in $\mathfrak{a}$, then $\underset{a\to\mathfrak{a}^-}{\lim}\mathfrak{b}_2(a)>\underset{a\to\mathfrak{a}^+}{\lim}\mathfrak{b}_2(a)$.
    \item If there exists market allocation $(a_{j,-},a_{j,+}]$ corresponds to $T_j$ and $a_{j,-}\notin supp\{\mathfrak{b}_2\}$, then $\underset{a\to (a_{j,-})^+}{\lim} b_j(a)=0$.
    \item If $\mathfrak{b}_i$ is discontinuous at point $a_{k,+}$, then $a_{k,+}=A_k$. 

\end{itemize}
We then assume that it also holds when the length of the data chain is $i$, and we introduce a definition
\begin{dee}
We define $i$-th enveloping utility function of the data chain $s_1\to s_2\to\cdots\to s_i$ as
    \begin{equation}
        \mathfrak{b}_i(a)=\begin{cases}
b_j(p_j,a) & \text{if}\quad a\in(a_{j,-},a_{j,+}],\quad j=1,2,\ldots,i, \\
0 & \text{otherwise.}
\end{cases}
    \end{equation}
\end{dee}

When the length of data chain is  $i+1$, we have
\begin{lem}
    Fixed $p$, $b_{i+1}(p,a)$ and $\mathfrak{b}_i(a)$ in $supp\{\mathfrak{b}_i\}$ have at most one point where their function values are equal.
\end{lem}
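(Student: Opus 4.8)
The plan is to fix $p$, write $h(\cdot):=b_{i+1}(p,\cdot)$ --- which by the standing assumptions on the utility functions is continuous and strictly increasing in its accuracy argument on $[\underline{A},A_{i+1}]$ --- and recall that $\mathfrak{b}_i$ equals $b_j(p_j,\cdot)$ on the $j$-th market interval $(a_{j,-},a_{j,+}]$ (call this ``piece $j$''), that these intervals are connected and ordered, $a_{1,-}<a_{1,+}\le a_{2,-}<\cdots\le a_{i,-}<a_{i,+}$, and that $S:=supp\{\mathfrak{b}_i\}=\bigcup_{j=1}^{i}(a_{j,-},a_{j,+}]$. The claim is then that $g:=h-\mathfrak{b}_i$ has at most one zero on $S$. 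I would argue by contradiction, assuming $a_0<a_1$ are two zeros of $g$ in $S$, with $a_0$ in piece $j$ and $a_1$ in piece $k$ (so $j\le k$).

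The first step is a single-piece analysis. At any zero of $g$ inside piece $\ell$ the common value is $b_\ell(p_\ell,\cdot)\ge 0$ (utility is non-negative throughout a model's market interval), which forces $h\ge 0$ there, and the point is $\le A_\ell<A_{i+1}$; hence that point lies in $D_{\ell,p_\ell}\cap D_{i+1,p}$, where accuracy-compatibility of the pair $(b_\ell,b_{i+1})$ (smaller index $\ell$) applies, so $B_{\ell,i+1}=b_\ell(p_\ell,\cdot)-h$ is sign-definite or has a single $+\to-$ sign change on that set, i.e. $g=-B_{\ell,i+1}$ is sign-definite or has a single $-\to+$ crossing. This disposes of the case $k=j$ at once, and it also yields the refinement I will need: if $a^{\dagger}$ is the unique zero of $g$ in piece $\ell$, then $g>0$ on $(a^{\dagger},a_{\ell,+}]$, because for $a>a^{\dagger}$ there one has $h(a)>h(a^{\dagger})=b_\ell(p_\ell,a^{\dagger})\ge 0$ by monotonicity of $h$, so $a$ again lies in $D_{\ell,p_\ell}\cap D_{i+1,p}$ and $B_{\ell,i+1}(a)<0$. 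In particular $g(a_{j,+})\ge 0$.

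The core of the proof, where I expect the real work to be, is the case $k>j$: I would propagate positivity of $g$ rightward, one market interval at a time, from piece $j$ up to piece $k$. At a boundary between consecutive (active) pieces $\ell$ and $\ell+1$ there are three possibilities, and in each I would show $\lim_{a\to a_{\ell+1,-}^{+}}g(a)>0$ once $g(a_{\ell,+})\ge 0$: (i) a downward jump $a_{\ell,+}=a_{\ell+1,-}$ with $b_{\ell+1}(p_{\ell+1},a_{\ell,+})<b_\ell(p_\ell,a_{\ell,+})$, so $\lim_{a\to a_{\ell+1,-}^{+}}g(a)=h(a_{\ell,+})-b_{\ell+1}(p_{\ell+1},a_{\ell,+})>g(a_{\ell,+})\ge 0$; (ii) a gap after piece $\ell$, where $a_{\ell+1,-}\notin S$ and the length-$i$ structural observation gives $b_{\ell+1}(p_{\ell+1},a)\to 0$ as $a\to a_{\ell+1,-}^{+}$, hence $\lim g(a)=h(a_{\ell+1,-})>h(a_{\ell,+})\ge b_\ell(p_\ell,a_{\ell,+})\ge 0$; (iii) a continuous (``competition'') boundary $a_{\ell,+}=a_{\ell+1,-}$, where $a_{\ell+1,-}$ is itself a zero of $h-b_{\ell+1}(p_{\ell+1},\cdot)$ lying in $D_{\ell+1,p_{\ell+1}}\cap D_{i+1,p}$, so by accuracy-compatibility it is the \emph{only} such zero and $g>0$ throughout the interior of piece $\ell+1$ anyway. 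In cases (i)--(ii), $g>0$ just inside piece $\ell+1$; since $h$ is increasing it stays positive, so every point of piece $\ell+1$ lies in $D_{\ell+1,p_{\ell+1}}\cap D_{i+1,p}$, and the only admissible shape of $g$ there ($-\to+$, or sign-definite) combined with $g>0$ near the left endpoint forces $g>0$ on all of piece $\ell+1$. Iterating from piece $j$ to piece $k$ (starting the iteration at the first boundary after $a_0$ when $a_0=a_{j,+}$) gives $g(a_1)>0$, contradicting $g(a_1)=0$.

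The main obstacle is the bookkeeping inside this rightward propagation rather than any conceptual difficulty: conceptually it only uses that $b_{i+1}(p,\cdot)$ is strictly increasing while $\mathfrak{b}_i$ never jumps upward, but to make it airtight one needs (a) the length-$i$ structural facts about $\mathfrak{b}_i$ --- connectedness and ordering of the market intervals, down-jumps only, and $b_{\ell+1}\to 0$ at a fresh left endpoint --- which are exactly the induction hypothesis; (b) a check at each step that the points in play lie in the sets $D_{\cdot,\cdot}\cap D_{i+1,p}$ on which accuracy-compatibility can be invoked, which reduces to the elementary fact that $b_\ell(p_\ell,\cdot)$ is strictly positive in the interior of every market interval; and (c) keeping the continuous ``competition'' boundary separate from the discontinuous ones, since only there is the single admissible zero spent at the left endpoint of the next piece, which is precisely why positivity still carries over. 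Everything else is routine.
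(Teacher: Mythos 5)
Your proposal is correct and follows essentially the same route as the paper: argue by contradiction from two equality points, use accuracy-compatibility within the pieces containing them together with strict monotonicity of $b_{i+1}(p,\cdot)$ in $a$, and then propagate the dominance of $b_{i+1}$ over $\mathfrak{b}_i$ rightward across the intermediate market intervals, splitting on whether the region in between stays inside $supp\{\mathfrak{b}_i\}$ (only downward jumps) or contains a gap where the envelope's right-limit is $0$. Your explicit three-way boundary bookkeeping is just a more detailed rendering of the paper's two cases, so no further comparison is needed.
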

\begin{proof}
    We prove this lemma by contradiction. We assume there are two adjacent points where $b_j(p_j,\alpha_1)=b_{i+1}(p,\alpha_1)$ and $b_k(p_k,\alpha_2)=b_{i+1}(p,\alpha_2)$. It is clear that $j\neq k$ due to the accuracy-compatibility. Without losing generality, we assume $j<k$, and then $\alpha_1<\alpha_2$ by definition.

    We have $b_{i+1}(p,a_{j,+})\geq\mathfrak{b}_i(a_{j,+})=b_j(p_j,a_{j,+})$, and $b_{i+1}(p,a_{k,-})<\underset{a\to (a_{k,-})^+}{\lim}\mathfrak{b}_i(a_{k,-})=b_k(p_k,a_{k,-})$ respectively, due to the accuracy-compatibility of $b_{i+1}$, $b_j$ and $b_k$. Also, we have $b_{i+1}(p,a_{j,+})\leq b_{i+1}(p,a_{k,-})$ due to monotonicity. Note that $b_{i+1}(p,a_{j,+})=b_{i+1}(p,a_{k,-})$ if and only if $j+1=k$, i.e., $a_{j,+}=a_{k,-}$, but in this case, we have $b_{i+1}(p,a_{j,+})\geq b_{j}(p_j,a_{j,+})\geq b_k(p_k,a_{k,-})>b_{i+1}(p,a_{k,-})$, it leads to a contradiction. So we claim that $b_{i+1}(p,a_{j,+})<b_{i+1}(p,a_{k,-})$ and $a_{j,+}\neq a_{k,-}$.

    We have
    \begin{itemize}
        \item If $(a_{j,+},a_{k,-}]\subseteq supp\{\mathfrak{b}_i\}$, then we denote the market allocations of $T_{j+1},T_{j+2},\cdots,T_{k-1}$ are $(a_{s,-},a_{s,+}]$($s=j+1,j+2,\ldots,k-1$). Due to compatibility and the fact that $b_{i+1}(p,a_{j,+})\geq b_j(p_j,a_{j,+})$, we have, note that $a_{s,+}=a_{s+1,-}$,
        \begin{equation}
            b_{i+1}(p,a)-b_s(p_s,a)>0,\quad a\in (a_{s,-},a_{s,+}],\quad s=j+1,j+2,\ldots,k-1.
        \end{equation}
        It leads to $b_{i+1}(p,a_{k,-})\geq \underset{a\to (a_{k,-})^+}{\lim}b_k(p_k,a_{k,-})>b_{i+1}(p,a_{k,-})$, which is a contradiction.
        \item If $(a_{j,+},a_{k,-}]\nsubseteq supp\{\mathfrak{b}_i\}$, then there exists $(a_{u,-},a_{k,-}]\subseteq supp\{\mathfrak{b}_i\}$ and by assumption, we have $\underset{a\to (a_{u,-})^+}{\lim}\mathfrak{b}_i(a)=0$. So
        \begin{equation}
            b_{i+1}(p,a)-b_s(p_s,a)>0,\quad a\in (a_{s,-},a_{s,+}],\quad s=u+1,u+2,\ldots,k-1,
        \end{equation}
        which is also a contradiction.
    \end{itemize}
\end{proof}

We assume that for the data chain $s_1\to s_2\to \cdots\to s_i$, the corresponding market allocation for each model is connected. Then, for the model $T_{i+1}$ and let $b_{i+1}(p,a'_{i+1})=0$ and $b_{i+1}(p,a_{i+1}^\ast)=\mathfrak{b}_i(a_{i+1}^\ast)\geq0$ if $a_{i+1}^\ast$ is attainable, we consider the following possible market allocation scenarios:
\begin{itemize}
    \item If $a'_{i+1}\in(a_{j-1,+},a_{j,-}]$, then $b_{i+1}(p,a)\geq\mathfrak{b}_i(a)$ where $a\in(a'_{i+1},a_{i,+}]$ due to accuracy-compatibility. We have
    \begin{equation}\label{geneicase1}
        p_{i+1}=\underset{p}{argmax}\,p\int_{a'_{i+1}}^{A_{i+1}}\lambda(a)da,\quad
        a_{j-1,+}< a'_{i+1}\leq a_{j,-}.
\end{equation}
Also, if $a'_{i+1}\in(a_{k,-},a_{k,+}]$ where $b_{i+1}(p,a)<\mathfrak{b}_i(a)$ for $a\in(a'_{i+1},a_{j,+}]$ and $b_{i+1}(p,a)>\mathfrak{b}_i(a)$ for $a\in(a_{j+1,-},A_i]$, then we have
    \begin{equation}\label{aiai+1case}
        p_{i+1}=\underset{p}{argmax}\,p\int_{a_{j,+}}^{A_{i+1}}\lambda(a)da,\quad
        a_{k,-}< a'_{i+1}\leq a_{k,+}
\end{equation}
and $a_{j,+}=A_j$. In these two cases, $T_{i+1}$ completely covers $T_j, T_{j+1},\cdots T_i$'s markets and has no competition with the previous $j-1$ models.
    \item If $a'_{i+1}\in(a_{k,-},a_{k,+}]$ and  exists $a_{i+1}^\ast\in(a_{j,-},a_{j,+}]$ such that $b_{i+1}(p,a_{i+1}^\ast)=\mathfrak{b}_i(a_{i+1}^\ast)\geq0$, then we have
    \begin{equation}\label{geneicase2}
        p_{i+1}=\underset{p}{argmax}\,p_j\int_{a_{j,-}}^{a_{i+1}^\ast}\lambda(a)da+p\int_{a_{i+1}^\ast}^{A_{i+1}}\lambda(a)da,\quad
        a_{k,-}< a'_{i+1}\leq a_{k,+},\quad a_{j,-}<a_{i+1}^\ast\leq a_{j,+}.
\end{equation}
In this case, $T_{i+1}$ completely covers the $T_{j+1}, T_{j+2},\cdots, T_i$'s markets but competes with the $j$-th model. 
    \item If $a'_{i+1}\in(A_i,A_{i+1}]$, then we have
\begin{equation}\label{geneicase4}
        p_i=\underset{p}{argmax}\,p\int_{a'_{i+1}}^{A_{i+1}}\lambda(a)da,\quad
        A_{i}< a'_{i+1}\leq A_{i+1}.
\end{equation}
In this case,  $T_i$ has no market competition with all previous models.
\end{itemize}

It is clear that all classifications related to the allocation of $T_{i+1}$ are connected. We select the optimal strategy with the highest profit and update the enveloping utility function accordingly.

We have
\begin{thm}\label{thm:connect}
    For any given cost allocation, the market allocations for all models are connected if the utility functions are accuracy-compatible.
\end{thm}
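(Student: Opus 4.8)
The plan is to prove Theorem~\ref{thm:connect} by induction on the chain length $n$, exactly along the lines already set up for $n=2$. To make the induction close, I would strengthen the hypothesis so that it also carries the four structural properties of the enveloping utility function recorded at the start of Section~\ref{subsec:stagen}: every model's market allocation is a (possibly empty) interval; $\mathfrak{b}_n$ is continuous off a finite set; every discontinuity of $\mathfrak{b}_n$ is a downward jump and occurs at some $A_k$; and if the left endpoint $a_{j,-}$ of an allocation $(a_{j,-},a_{j,+}]$ lies outside $supp\{\mathfrak{b}_n\}$ then $\lim_{a\to (a_{j,-})^+}b_j(p_j,a)=0$. The base case $n=1$ is immediate, the allocation of $T_1$ being $(a'_1,A_1]$ with $b_1(p_1,a'_1)=0$; the case $n=2$ is precisely the classification \eqref{i2case1}--\eqref{i2case4} together with the four observations already listed there.

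For the inductive step I would assume the strengthened statement for chains of length $i$, fix $p_1,\ldots,p_i$ and the resulting envelope $\mathfrak{b}_i$, and let $T_{i+1}$ enter at an arbitrary admissible price $p$ with $b_{i+1}(p,a'_{i+1})=0$. A buyer with requirement $a$ chooses $T_{i+1}$ exactly when $a\le A_{i+1}$, $b_{i+1}(p,a)>0$ and $b_{i+1}(p,a)>\mathfrak{b}_i(a)$ (ties lie in a null set and may be broken toward the higher-accuracy model without affecting any integral). Since $b_{i+1}(p,\cdot)$ is strictly increasing, $T_{i+1}$ is never chosen for $a\le a'_{i+1}$, while on $(A_i,A_{i+1}]$ — which is disjoint from $supp\{\mathfrak{b}_i\}$ and on which $T_{i+1}$ is the only admissible model — it is chosen throughout whenever $a'_{i+1}\le A_i$. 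On $supp\{\mathfrak{b}_i\}$ the preceding Lemma provides at most one point at which $b_{i+1}(p,\cdot)$ equals $\mathfrak{b}_i$. Combining this with the monotonicity of $b_{i+1}(p,\cdot)$, the piecewise monotonicity of $\mathfrak{b}_i$, the fact that all of its jumps are downward, and accuracy-compatibility — which forces $B_{j,i+1}=b_j(p_j,\cdot)-b_{i+1}(p,\cdot)$, once nonpositive, to stay negative on the common domain — one obtains that the set of buyers choosing $T_{i+1}$ is a single interval with right endpoint $A_{i+1}$, realizing precisely one of the configurations \eqref{geneicase1}--\eqref{geneicase4}.

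It then follows that the allocations of $T_1,\ldots,T_i$ are obtained from the old ones by leaving untouched those entirely below $T_{i+1}$'s interval, truncating from the right the at most one allocation straddling its left endpoint, and emptying those it covers entirely; so each remains an interval. Hence $\mathfrak{b}_{i+1}$ is again a concatenation of finitely many increasing pieces, continuous off a finite set; its only new junction, at the left endpoint of $T_{i+1}$'s interval, is either continuous (it is the unique crossing supplied by the Lemma) or a downward jump located, by the analysis above, at some $A_k$; and if $T_{i+1}$'s interval opens inside a former gap then $b_{i+1}(p,\cdot)$ vanishes at its left endpoint. This closes the induction, and specializing to the profit-maximizing $p_{i+1}$ yields the theorem.

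The hard part is the geometric claim in the inductive step, namely that $T_{i+1}$'s winning set is connected — that $T_{i+1}$ cannot win the market, lose it, then win it again. The Lemma rules out a second \emph{continuous} re-crossing, but one must still exclude $T_{i+1}$ recapturing the market across a downward jump of $\mathfrak{b}_i$ after having lost it; this is handled by the same triple accuracy-compatibility argument used in the proof of the Lemma (the chain $s=j+1,\ldots,k-1$), together with the fact that a jump of $\mathfrak{b}_i$ at $A_k$ forces $T_k$'s allocation to terminate at its own accuracy cap. A cleaner alternative would bypass the envelope bookkeeping: if buyers $a<a''$ both choose $T_j$ while some $a'\in(a,a'')$ chooses $T_m\neq T_j$, apply accuracy-compatibility to $B_{m,j}$ on $\{a,a'\}$ when $m<j$ and to $B_{j,m}$ on $\{a',a''\}$ when $m>j$; the availability constraints (such as $A_m\ge a'$) hold automatically because those buyers selected the models in question, and the only delicate point — a utility that is negative at one of the two chosen points — is dealt with by sliding that point to the strictly interior zero of the utility and reading off a sign contradiction.
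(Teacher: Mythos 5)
Your proposal is correct and takes essentially the same route as the paper: induction on the chain length, with the at-most-one-crossing Lemma and the case analysis of where $a'_{i+1}$ (and possibly $a_{i+1}^\ast$) falls driving the inductive step, carrying the four structural observations on $\mathfrak{b}_i$ as a strengthened hypothesis exactly as the paper does implicitly. The direct exchange argument you sketch at the end is a reasonable alternative but not needed; your main line matches the paper's proof.
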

Based on observations, we present the following corollaries without proof.
\begin{cor}\label{cor1}
    In the case of $b_i(p,a'_i)=0$, $a'_i$ is a strictly monotonically increasing function of $p$.
\end{cor}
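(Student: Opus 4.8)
The plan is to exploit directly the two monotonicity hypotheses imposed on $b_i$ in Section~\ref{sec:modeling}---strict decrease in the price $p$ and strict increase in the accuracy argument $a$ on $[\underline{A},A_i]$---without appealing to differentiability. First I would make the object precise: on the set of prices $p$ for which the equation $b_i(p,a)=0$ admits a solution $a\in[\underline{A},A_i]$, strict monotonicity of $b_i$ in $a$ guarantees that this solution is unique, so $a'_i$ is a well-defined single-valued function $p\mapsto a'_i(p)$. Continuity of the $b_i$'s together with the intermediate value theorem then secures existence of the root on the relevant price interval, which is exactly the regime in which the hypothesis $b_i(p,a'_i)=0$ is invoked.

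The core of the argument is a short comparative-statics contradiction. Fix two admissible prices $p<p'$ and set $a:=a'_i(p)$ and $a':=a'_i(p')$, so that $b_i(p,a)=0=b_i(p',a')$ with $a,a'\in[\underline{A},A_i]$. Suppose toward a contradiction that $a'\leq a$. Since $b_i$ is strictly decreasing in its first argument and $p<p'$, evaluating at the fixed accuracy $a$ gives $b_i(p',a)<b_i(p,a)=0$. On the other hand, since $b_i$ is increasing in $a$ and $a'\leq a$, we obtain $b_i(p',a')\leq b_i(p',a)<0$, contradicting $b_i(p',a')=0$. Hence $a'>a$, i.e. $a'_i(p')>a'_i(p)$, which is precisely the asserted strict monotone increase of $a'_i$ in $p$.

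I would then remark that, under the additional assumption of differentiability of $b_i$ (not needed here but sometimes convenient), the same conclusion follows from the implicit function theorem via $\frac{d a'_i}{dp}=-\big(\partial_p b_i\big)\big/\big(\partial_a b_i\big)$, whose numerator is negative and denominator positive, forcing the derivative to be strictly positive; the order-theoretic argument above is preferable because it uses only the qualitative monotonicity already assumed. I do not expect a genuine obstacle in this proof: the only point requiring care is domain bookkeeping---ensuring that both roots $a$ and $a'$ lie in $[\underline{A},A_i]$ so that the monotonicity-in-$a$ property applies, and confirming that the price range over which $a'_i$ is defined is itself an interval (which holds because, as $p$ grows, $b_i(p,\underline{A})$ and $b_i(p,A_i)$ both decrease, so the root-existence conditions $b_i(p,\underline{A})\leq 0\leq b_i(p,A_i)$ cut out a single interval of admissible prices), so that comparing two prices $p<p'$ within it is legitimate.
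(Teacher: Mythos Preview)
Your proof is correct. The paper presents this corollary explicitly ``without proof'' as an immediate observation from the standing monotonicity assumptions on $b_i$, and your comparative-statics contradiction argument using strict decrease in $p$ together with strict increase in $a$ on $[\underline{A},A_i]$ is precisely the natural way to make that observation rigorous; the additional remarks on well-definedness and the implicit-function-theorem alternative are accurate but optional.
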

\begin{cor}
    For the case of \eqref{aiai+1case}, we have $b_{i+1}(p_{i+1},A_j)=0$.
\end{cor}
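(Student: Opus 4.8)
The plan is to exploit the very special form of the profit functional that defines case \eqref{aiai+1case}. Observe first that in the objective $p\int_{a_{j,+}}^{A_{i+1}}\lambda(a)\,da$ of \eqref{aiai+1case} both limits of integration are independent of the decision variable $p$: the lower limit is $a_{j,+}=A_j$, which is fixed once the cost allocation (hence $A_j$) has been chosen in the outer loop, and the upper limit $A_{i+1}$ is likewise fixed. Hence the objective is $\pi(p)=C\,p$ with $C=\int_{A_j}^{A_{i+1}}\lambda(a)\,da$ a positive constant (using that $\lambda$ is continuous and carries positive mass on $(A_j,A_{i+1})$). Therefore $\pi$ is strictly increasing in $p$, and its maximiser over the set of prices consistent with this case is attained at the largest admissible price. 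The whole argument then reduces to identifying that largest admissible price.

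Next I would pin down the admissible $p$-range for case \eqref{aiai+1case}. Its defining feature is that the market captured by $T_{i+1}$ is exactly $(a_{j,+},A_{i+1}]=(A_j,A_{i+1}]$: the model beats the enveloping utility $\mathfrak{b}_i$ for every $a>A_j$ while leaving the markets of $T_1,\dots,T_{j-1}$ untouched. For the lower endpoint to sit precisely at $A_j$ we need $T_{i+1}$ to yield non-negative utility there, i.e.\ $a'_{i+1}\le A_j$ where $b_{i+1}(p,a'_{i+1})=0$; equivalently $b_{i+1}(p,A_j)\ge 0$, since $b_{i+1}$ is strictly increasing in $a$. By Corollary~\ref{cor1}, $a'_{i+1}$ is strictly increasing in $p$, so as $p$ grows the break-even accuracy $a'_{i+1}$ moves monotonically to the right. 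The moment $a'_{i+1}$ exceeds $A_j$ we have $b_{i+1}(p,A_j)<0$: the lower boundary of $T_{i+1}$'s market detaches from $A_j$ and slides to $a'_{i+1}$, so the captured market becomes $(a'_{i+1},A_{i+1}]$ with $a'_{i+1}>A_j$, which is one of the no-competition configurations of the type \eqref{geneicase1} rather than \eqref{aiai+1case}. Consequently the upper endpoint of the admissible price interval is exactly the price at which $a'_{i+1}=A_j$, i.e.\ the price $p_{i+1}$ with $b_{i+1}(p_{i+1},A_j)=0$. Combining with the first paragraph, $\pi$ is strictly increasing on the admissible interval and that interval terminates at this price, so the argmax is attained there, giving the claim.

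The main obstacle I anticipate is making rigorous that, as $p$ increases within case \eqref{aiai+1case}, the first constraint to bind is $b_{i+1}(p,A_j)=0$ rather than the competing requirement that $T_{i+1}$ continue to dominate each of $T_{j+1},\dots,T_i$ on $(A_j,A_i]$. Here I would lean on the hypotheses that single out this case: were the dominance to fail first, a crossing point $a_{i+1}^\ast\in(A_j,A_i]$ with $b_{i+1}(p,a_{i+1}^\ast)=\mathfrak{b}_i(a_{i+1}^\ast)\ge 0$ would appear (accuracy-compatibility guarantees at most one such crossing), and this is precisely the defining condition of the competition case \eqref{geneicase2}; thus remaining in \eqref{aiai+1case} forces the break-even constraint to be the operative one. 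A secondary point to dispatch is that the price cap $P$ is not active in this regime (otherwise $p_{i+1}=P$ and the structural optimum is not reached); under the standing assumption that $P$ is slack here, the monotonicity argument above delivers $b_{i+1}(p_{i+1},A_j)=0$.
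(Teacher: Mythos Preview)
The paper does not actually prove this corollary; it is listed among the ``corollaries without proof'' that follow Theorem~\ref{thm:connect}. Your argument---that the objective in \eqref{aiai+1case} is a positive constant times $p$, so the optimiser sits at the largest admissible price, which by Corollary~\ref{cor1} corresponds to $a'_{i+1}=A_j$---is exactly the intended observation and is essentially correct.

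The obstacle you flag is real, and your resolution of it is not quite watertight. You note that if the dominance condition $b_{i+1}(p,\cdot)>\mathfrak{b}_i(\cdot)$ on $(a_{j+1,-},A_i]$ were to fail first one would pass into case~\eqref{geneicase2}, but you stop short of showing it \emph{cannot} fail before $a'_{i+1}$ reaches $A_j$; if it did, the argmax within \eqref{aiai+1case} would sit at that earlier boundary and $b_{i+1}(p_{i+1},A_j)$ would be strictly positive. When there is a gap ($A_j<a_{j+1,-}$) this worry dissolves: $\mathfrak{b}_i$ restarts from $0$ at $a_{j+1,-}$ while $b_{i+1}(p,a_{j+1,-})>0$ for every $p$ with $a'_{i+1}\le A_j$, and the cascade argument behind the connectivity lemma then forces $b_{i+1}>\mathfrak{b}_i$ throughout $(a_{j+1,-},A_i]$, so the break-even constraint is indeed the one that binds. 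The no-gap discontinuity case is more delicate, but the paper itself does not address it and evidently reads the corollary at the same informal level you do.
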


\subsection{Continuity of the revenue function}
The goal of this subsection is to prove that $r_{i+1}(p)$ is a continuous function of $p$ under certain conditions. We have $(a_{q.-},a_{q,+}]$($q=1,2,\ldots,i$) as the optimal market allocation for $T_1,T_2,\cdots,T_i$. We then introduce the model $T_{i+1}$ to the market. If there exists $(a_{k,-},a_{j,+}]\in supp\{\mathfrak{b}_i\}$ where $a_{k,-}\notin supp\{\mathfrak{b}_i\}$ and $\mathfrak{b}_i$ is discontinuous at point $a_{j,+}=A_j$, then we denote $(a_{k,-},A_j]$ as a {\bf block}. The market allocation consists of a few blocks.
\begin{lem}
     We let $b_{i+1}(\rho,\alpha)=0$ where $b_{i+1}(\rho,A_j)=b_j(p_j,A_j)$. For a price $p'>0$, the equation $b_{i+1}(p',a)=\mathfrak{b}_i(a)>0$ can hold if $a'\in(a_{k,-},\alpha]$ where $b_{i+1}(p',a')=0$ and $b_{i+1}(p',a)=\mathfrak{b}_i(a)$ cannot hold if $a'\in(\alpha,A_j]$.
\end{lem}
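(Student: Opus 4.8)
The plan is to read both assertions off the shape of $\mathfrak{b}_i$ on the block $(a_{k,-},A_j]$, using only price-monotonicity, Corollary~\ref{cor1}, and accuracy-compatibility. First I record the shape of $\mathfrak{b}_i$ on a block: by its construction (together with the structural observations of Section~\ref{subsec:stagen}) $\mathfrak{b}_i$ is continuous and strictly increasing on $(a_{k,-},A_j]$, glued from pieces $b_q(p_q,\cdot)$ in increasing order of $q$ whose values agree at the transition accuracies, with $\lim_{a\to a_{k,-}^+}\mathfrak{b}_i(a)=0$ and $\mathfrak{b}_i(A_j)=b_j(p_j,A_j)>0$. Next I record the monotonicity facts to be used repeatedly: $b_{i+1}(\cdot,a)$ is strictly decreasing in the price, and by Corollary~\ref{cor1} the accuracy at which $b_{i+1}(p',\cdot)$ vanishes is strictly increasing in $p'$. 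Since $\alpha$ is this accuracy at $p'=\rho$ and $b_{i+1}(\rho,A_j)=\mathfrak{b}_i(A_j)>0$ forces $\alpha<A_j$, I obtain the dictionary $a'\le\alpha\iff p'\le\rho$ and $a'>\alpha\iff p'>\rho$, which converts the hypotheses on $a'$ into hypotheses on $p'$; I may also assume $a_{k,-}<\alpha<A_j$, the contrary case making the first hypothesis vacuous and leaving the second covered by the same argument.

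For the first assertion, suppose $a'\in(a_{k,-},\alpha]$, hence $p'\le\rho$ and $a'<A_j$. Extend $\mathfrak{b}_i$ to $a_{k,-}$ by $0$ and set $g(a)=b_{i+1}(p',a)-\mathfrak{b}_i(a)$, continuous on $[a_{k,-},A_j]$. On $[a_{k,-},a']$ one has $b_{i+1}(p',a)\le 0<\mathfrak{b}_i(a)$ for $a>a_{k,-}$ and $b_{i+1}(p',a_{k,-})<0$, so $g<0$ there; at the right end, price-monotonicity gives $b_{i+1}(p',A_j)\ge b_{i+1}(\rho,A_j)=\mathfrak{b}_i(A_j)$, so $g(A_j)\ge 0$. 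The intermediate value theorem then produces $a^\ast\in(a',A_j]$ with $g(a^\ast)=0$, and $a^\ast>a'>a_{k,-}$ forces $\mathfrak{b}_i(a^\ast)>0$; hence $b_{i+1}(p',a^\ast)=\mathfrak{b}_i(a^\ast)>0$, which is the first assertion.

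For the second assertion, suppose $a'\in(\alpha,A_j]$, hence $p'>\rho$ and $b_{i+1}(p',A_j)<b_{i+1}(\rho,A_j)=b_j(p_j,A_j)=\mathfrak{b}_i(A_j)$. I will show $b_{i+1}(p',a)<\mathfrak{b}_i(a)$ for every $a$ in the block, which forbids any solution of $b_{i+1}(p',a)=\mathfrak{b}_i(a)$ there. On $(a_{k,-},a')$ this is automatic, since $b_{i+1}(p',\cdot)<0<\mathfrak{b}_i$; on $[a',A_j]$, where $b_{i+1}(p',\cdot)\ge 0$ so accuracy-compatibility applies, I propagate the inequality leftward from $A_j$. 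On the rightmost piece $(a_{j,-},A_j]$ the difference $B_{j,i+1}=b_j(p_j,\cdot)-b_{i+1}(p',\cdot)$ is positive at $A_j$, and a zero of $B_{j,i+1}$ would, by accuracy-compatibility, make it negative immediately to its right, contradicting positivity at $A_j$; so $B_{j,i+1}>0$ on the overlap of this piece with $[a',A_j]$. If that overlap reaches $a_{j,-}$, the value-matching of $\mathfrak{b}_i$ at that transition accuracy carries the strict inequality onto the right end of the next piece down, where the same step applies to the corresponding $b_q(p_q,\cdot)$ and $b_{i+1}(p',\cdot)$; iterating yields $b_{i+1}(p',\cdot)<\mathfrak{b}_i$ all the way down to $a'$, and the claim follows.

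I expect the leftward propagation in the second assertion to be the main obstacle. The delicate point is that strictness must survive every transition accuracy: if the two curves merely touched at a node $a_{q,+}$, accuracy-compatibility would force $b_{i+1}(p',\cdot)$ to strictly exceed the next piece just above $a_{q,+}$, contradicting the inequality established there; and accuracy-compatibility must be invoked only where both $b_q(p_q,\cdot)$ and $b_{i+1}(p',\cdot)$ are nonnegative, i.e. on subsets of $[a',A_j]$ inside each piece, the portion of the block to the left of $a'$ being handled by the trivial sign comparison. Once this bookkeeping is done, both assertions follow.
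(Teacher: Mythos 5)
Your proof is correct and follows essentially the same route as the paper: for $a'\in(a_{k,-},\alpha]$ you use Corollary~\ref{cor1} and price-monotonicity to get $b_{i+1}(p',A_j)\geq\mathfrak{b}_i(A_j)$ and then the intermediate value theorem on the block, exactly as in the paper's proof. For $a'\in(\alpha,A_j]$, which the paper dismisses as ``similar'' and omits, your leftward single-crossing propagation via accuracy-compatibility (with the strictness check at each transition accuracy) is a faithful and complete version of the intended argument, relying only on the same structural facts about $\mathfrak{b}_i$ on a block that the paper itself uses.
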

\begin{proof}
    For $a'\in(a_{k,-},\alpha]$ we denote $0=b_{i+1}(p',a')<\mathbf{b}_i(a')$. Note that $p'<\rho$ by Corollary \ref{cor1} and we have
    \begin{equation}
        b_{i+1}(p',A_j)>b_{i+1}(\rho,A_j)=b_j(p_j,A_j)=\mathbf{b}_i({A_j}).
    \end{equation}
Since $b_{i+1}(p',a)$ and $\mathbf{b}_i(a)$ are both continuous in $(a_{k,-},A_j]$, it leads to $b_{i+1}(p',a)=\mathbf{b}_i(a)>0$ is attainable. The situation of $a'\in(\alpha,A_j]$ is similar and we omit the details.
\end{proof}
\begin{thm}
     For a block $(a_{k,-},A_j]$, if $a_{i+1}^\ast$ is a continuous function of price $p$ and
     $\underset{a'_{i+1}\to (a_{k,-})^+}{\lim} b_i(p,a_{i+1}^\ast)=0$ where $b_{i+1}(p,a'_{i+1})=0$, then revenue function $r_{i+1}$ of $T_{i+1}$ is a continuous function of $a'_{i+1}$ in this block. Moreover, if it holds for all blocks, then $r_{i+1}$ is continuous in $[\underline{A},A_{i+1}]$ and it leads to $r_i$ is a continuous function of the price.
\end{thm}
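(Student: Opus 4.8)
\medskip
\noindent\textbf{Proof strategy.}\quad
The plan is to reparametrize the revenue by $a'_{i+1}$, reduce the claim to continuity of the left endpoint of $T_{i+1}$'s market, and then glue the finitely many regimes of Section~\ref{subsec:stagen}. First, by Corollary~\ref{cor1} (together with continuity of $b_{i+1}$), the zero $a'_{i+1}$ of $b_{i+1}(p,\cdot)$ depends strictly monotonically and continuously on the price $p$, so it suffices to prove that $r_{i+1}$ is a continuous function of $a'_{i+1}$; composing with the continuous map $p\mapsto a'_{i+1}$ then recovers continuity in the price $p_{i+1}$, which is what the induction needs (a continuous revenue on the compact price interval attains its maximum, so step~$i{+}1$ of the procedure is well defined). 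Next, inspecting the classification of Section~\ref{subsec:stagen}, in every case the market of $T_{i+1}$ is an interval $(L,A_{i+1}]$ with $r_{i+1}=p\int_{L}^{A_{i+1}}\lambda(a)\,da$, where $L=L(a'_{i+1})$ equals $a'_{i+1}$ in the ``no competition'' cases~\eqref{geneicase1} and \eqref{geneicase4}, equals the crossover point $a_{i+1}^{\ast}$ in the ``competition'' case~\eqref{geneicase2}, and equals $A_j$ in the ``jump-over'' case~\eqref{aiai+1case}. Since $\lambda$ is continuous and $p$ is continuous in $a'_{i+1}$, the problem reduces to showing $L$ is continuous in $a'_{i+1}$.

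For a single block $(a_{k,-},A_j]$ I would argue as follows. Because the sub-markets filling the block are connected (Theorem~\ref{thm:connect}), the ``no competition'' cases do not occur here, so $L\in\{a_{i+1}^{\ast},A_j\}$. Let $\alpha$ be the value supplied by the Lemma immediately preceding this theorem, i.e.\ the zero of $b_{i+1}(\rho,\cdot)$ at the critical price $\rho$ with $b_{i+1}(\rho,A_j)=b_j(p_j,A_j)=\mathfrak{b}_i(A_j)$. For $a'_{i+1}>\alpha$ that Lemma gives no crossover in the block, so $L\equiv A_j$ and $r_{i+1}=p\int_{A_j}^{A_{i+1}}\lambda$ is trivially continuous; for $a'_{i+1}\le\alpha$ a crossover exists and $L=a_{i+1}^{\ast}$, which is continuous in $a'_{i+1}$ precisely by the hypothesis that $a_{i+1}^{\ast}$ is a continuous function of $p$, and this persists even as the crossover migrates between adjacent sub-markets of the block, since $\mathfrak{b}_i$ is continuous at those internal breakpoints and $r_{i+1}$ retains the form $p\int_{a_{i+1}^{\ast}}^{A_{i+1}}\lambda$. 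At $a'_{i+1}=\alpha$ the two expressions agree because $a_{i+1}^{\ast}\to A_j$ as $a'_{i+1}\to\alpha^{-}$, again by continuity of $a_{i+1}^{\ast}$. Finally, as $a'_{i+1}\to (a_{k,-})^{+}$ the standing hypothesis $\underset{a'_{i+1}\to (a_{k,-})^{+}}{\lim} b_i(p,a_{i+1}^{\ast})=0$ (equivalently $\mathfrak{b}_i(a_{i+1}^{\ast})\to 0$, since $\mathfrak{b}_i$ vanishes in the block only at its left endpoint) forces $a_{i+1}^{\ast}\to a_{k,-}$, hence $L\to a_{k,-}$. Assembling these pieces yields continuity of $r_{i+1}$ on the block.

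To pass from ``each block'' to $[\underline{A},A_{i+1}]$, recall that the market allocation is a finite disjoint union of blocks separated by gaps, plus a possible initial segment below the first block and the top segment $(A_i,A_{i+1}]$; on a gap and on the top segment (case~\eqref{geneicase4}) one has $\mathfrak{b}_i\equiv 0$, so $L=a'_{i+1}$ and $r_{i+1}=p\int_{a'_{i+1}}^{A_{i+1}}\lambda$ is continuous. At the junction between a gap and the left end $a_{k,-}$ of a block, the one-sided limits of $L$ both equal $a_{k,-}$ (from the gap side since $L=a'_{i+1}$, from the block side by the previous paragraph), and at the right end $A_j$ of a block the value $L=A_j$ matches the gap immediately to its right. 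Hence $r_{i+1}$ is continuous on all of $[\underline{A},A_{i+1}]$, and composing with the monotone continuous map of Corollary~\ref{cor1} gives that $r_{i+1}$ is a continuous function of the price.

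The step I expect to be the main obstacle is the gluing at $a'_{i+1}=\alpha$: showing the left endpoint does not jump when the price crosses $\rho$ requires combining the preceding Lemma (to pin down $\alpha$ and to exclude a crossover for $a'_{i+1}>\alpha$), accuracy-compatibility (so that once $b_{i+1}$ has fallen below $\mathfrak{b}_i$ at $A_j$ it stays strictly below on the rest of the block, leaving only the ``jump-over'' geometry), and the hypothesized continuity of $a_{i+1}^{\ast}$. A secondary subtlety is verifying that the case list of Section~\ref{subsec:stagen} is genuinely exhaustive and that nearby prices fall in the same or in adjacent cases, so that no hidden regime is skipped in the gluing.
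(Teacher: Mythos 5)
Your proposal is correct and follows essentially the same route as the paper: both reduce the claim to checking that the left endpoint of $T_{i+1}$'s market (hence $r_{i+1}=p\int_{L}^{A_{i+1}}\lambda$) does not jump at the transition points between the regimes of Subsection \ref{subsec:stagen} — the critical point $\alpha$ from the preceding lemma, the block's left endpoint $a_{k,-}$ via the stated limit hypothesis, the gap/block junctions, and $A_i$ — and then pass to continuity in the price via the monotone zero map of Corollary \ref{cor1}. Your organization through the endpoint function $L$ is just a repackaging of the paper's endpoint-by-endpoint verification, so no substantive difference.
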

\begin{proof}
    We list all the endpoints in Subsection \ref{subsec:stagen} and show that all of them are continuous.
    \begin{itemize}
        \item For $a'_{i+1}=\alpha$, we have $r_{i+1}(\alpha)=p\int_{A_j}^{A_{i+1}}\lambda(a)da$ and $r_{i+1}(a'_{i+1})\to p\int_{A_j}^{A_{i+1}}\lambda(a)da$ for $a'_{i+1}\to(\alpha)^-$. For $a'_{i+1}\to(\alpha)^+$, we have, note that $b_{i+1}(p,a)<\mathfrak{b}_i(a)$ where $a\in(a'_{i+1},A_j]$, it also leads to $r_{i+1}(a'_{i+1})=p\int_{A_j}^{A_{i+1}}\lambda(a)da$.
        \item For $a'_{i+1}=a_{k,-}$, we have $r_{i+1}(a'_{i+1})=p\int_{a'_{i+1}}^{A_{i+1}}\lambda(a) da$ and $r_{i+1}(a'_{i+1})\to p\int_{a'_{i+1}}^{A_{i+1}}\lambda(a) da$ if $a'_{i+1}\to(a_{k,-})^-$. For $a'_{i+1}\to(a_{k,-})^+$, if $a_{i+1}^\ast$ is attainable, then by assumption, we have $a_{i+1}^\ast\to a'_{i+1}$ and $r_{i+1}(a'_{i+1})\to p\int_{a'_{i+1}}^{A_{i+1}}\lambda(a) da$. The conclusion is consistent in the case of $a_{i+1}^\ast$ not attainable.
        \item For $a'_{i+1}=a_{j,+}\neq a_{j+1,-}$, we have $r_{i+1}(a'_{i+1})=p\int_{a'_{i+1}}^{A_{i+1}}\lambda(a)da$. For $a'_{i+1}\to (a_{j,+})^+$, it leads to  $r_{i+1}(a'_{i+1})\to p\int_{a'_{i+1}}^{A_{i+1}}\lambda(a)da$ by \eqref{geneicase1}. For $a'_{i+1}\to (a_{j,+})^-$, if $a_{i+1}^\ast$ is attainable, then $a'_{i+1}\leq a_{i+1}^\ast<a_{j,+}$ and $r_{i+1}(a'_{i+1})\to p\int_{a'_{i+1}}^{A_{i+1}}\lambda(a)da$. The conclusion is consistent in the case of $a_{i+1}^\ast$ not attainable.
        \item For $a'_{i+1}=A_i$, note that it is an endpoint of a block and $a_{i,+}=A_i$. The left-continuity follows. For $a'_{i+1}\to (A_i)^+$, we have $r_{i+1}(a'_{i+1})=p\int_{a'_{i+1}}^{A_{i+1}}\lambda(a)da\to p\int_{A_i}^{A_{i+1}}\lambda(a)da$.
    \end{itemize}

\end{proof}

\section{Dual and quasi-dual situation}
We consider the dual and quasi-dual situation in this section, where the maximum permissible price for each buyer in the market is known, and our goal is also to maximize the seller's profit. We assume that the maximum permissible price for buyers in the market follows a continuous distribution denoted as $\mu(p)$. In this scenario, we continue to use dual generalized Hotelling's law. However, in this part, we abstract the unit price of the model as distance and mark the precision of the model as characteristics on points, see Figure \ref{dualdatahotelling}. We aim to find a static pricing strategy and the corresponding market allocation that maximizes the seller's profit. Note that with a slight abuse of notation, we use $\mathfrak{B}=(b_1,b_2,\cdots,b_n)$ in all situations.

\subsection{Dual situation}
We denote a class of functions $\mathfrak{B}=(b_1,b_2,\cdots,b_n)$ as dual utility functions, where each index corresponds to the price $P_i$ allocated to model $T_i$. Similarly, the independent variables of $b_i$'s include the model accuracy $a$ and the buyers' maximum permissible price $p$. Properties are similar to the primal situation.
We denote $D_{i,a}=\{p\in[0,P]|b_i(a,p)\geq0\}$ and denote $P_1<P_2<\cdots<P_n$.

\begin{dee}
    We name the class of functions $\mathfrak{B}=(b_1,b_2,\cdots,b_n)$ are price-compatible if for any $i<j$ and fixed $a,a'\in[\underline{A},\overline{A}]$, we denote $B_{ij}(p)=b_i(a,p)-b_j(a',p)$ and we have
    \begin{itemize}
        \item $B_{ij}(p)>0$ or $B_{ij}(p)<0$ in $D_{i,a}\bigcap D_{j,a'}$.
        \item If there exits $p$ such that $B_{ij}(p)=0$, then $B_{ij}(p_-)<0$ for $D_{i,a}\bigcap D_{j,a'}\ni p_->p$ and $B_{ij}(p_+)>0$ for $D_{i,a}\bigcap D_{j,a'}\ni p_+<p$.
    \end{itemize}
\end{dee}

Note that in Section \ref{sec:stathote}, we have presented an algorithm for pricing strategy. In this section, we interchange two variables from the previous section and, therefore, omit the detailed algorithm for brevity.

\subsection{Quasi-dual situation}
We observe that due to duality, the assumption in the dual scenario also shifts to the seller listing the pricing menu of all the models, rather than the cost allocations. However, when we continue to employ the assumption from Section \ref{sec:stathote}, namely that the seller can select a reasonable cost allocation to maximize profit, the optimization mechanism in the dual scenario becomes inapplicable. Therefore, we consider the quasi-dual scenario, where we reiterate the premises and assumptions within this context.

We assume a class of functions $\mathfrak{B}=(b_1,b_2,\cdots,b_n)$ called quasi-dual utility functions, where each index corresponds to the accuracy $A_i$ allocated to the model $T_i$. Similarly, the independent variables of $b_i$'s include the model price $p$ and the buyers' maximum permissible price $\bar{p}$, and $p\leq\bar{p}$. We define $D_{i,p}=\{\bar{p}\in[0,P]|b_i(p,\bar{p})\geq0\}$. We assume $c_1<c_2<\cdots<c_n$.

\begin{dee}
    We name the class of functions $\mathfrak{B}=(b_1,b_2,\cdots,b_n)$ are second-type accuracy-compatible if for any $i<j$ and fixed $p,p'\in[0,P]$, we denote $B_{ij}(\bar{p})=b_i(p,\bar{p})-b_j(p',\bar{p})$ and we have
    \begin{itemize}
        \item $B_{ij}(\bar{p})>0$ or $B_{ij}(\bar{p})<0$ in $D_{i,p_i}\bigcap D_{j,p_j}$.
        \item If there exits $\bar{p}$ such that $B_{ij}(\bar{p})=0$, then $B_{ij}(\bar{p}_-)<0$ for $D_{i,p_i}\bigcap D_{j,p_j}\ni \bar{p}_->\bar{p}$ and $B_{ij}(\bar{p}_+)>0$ for $D_{i,p_i}\bigcap D_{j,p_j}\ni \bar{p}_+<\bar{p}$.
    \end{itemize}
\end{dee}

If there is one model sold in the market with price $p$, the corresponding cost $c_1$ and $b_1(p,\bar{p}_1)=0$, we have
\begin{equation}
    p_1=\underset{p}{argmax}\,p\int_p^{\bar{p}_1}\mu(\bar{p})d\bar{p}.
\end{equation}
We rename the market allocation of $T_1$ as $[\bar{p}_{1,-},\bar{p}_{1,+})$.

    We assume two models $T_1$ and $T_2$ are sold in the market, and the corresponding utility functions $b_1$ and $b_2$ are second-type accuracy-compatible. Let $b_2(p,\bar{p}_2)=0$ and $b_2(p,\bar{p}_2^\ast)=b_1(p_1,\bar{p}_2^\ast)>0$ if $\bar{p}_2^\ast$ is attainable, we have
    \begin{itemize}
    \item If the $T_2$'s market completely covers the $T_1$'s, then we have
    \begin{equation}
        p_2=\underset{p}{argmax}\,p\int_p^{\bar{p}_2}\mu(\bar{p})d\bar{p},\quad
        \bar{p}_2\geq\bar{p}_{1,+},\quad p\leq \bar{p}_{1,-}.
\end{equation}                                                           
    \item If there is market competition between $T_2$ and $T_1$, then we have
    \begin{equation}
        p_2=\underset{p}{argmax}\,p_1\int_{\bar{p}_{1,-}}^{\bar{p}_2^\ast}\mu(\bar{p})d\bar{p}+p\int^{\bar{p}_2}_{\bar{p}_2^\ast}\mu(\bar{p})d\bar{p},\quad
        \bar{p}_{1,+}\geq \bar{p}_2^\ast\geq \bar{p}_{1,-}.
\end{equation}
    \item If there is no market competition between $T_1$ and $T_2$, meaning that the buyers choosing these two models do not overlap, then we have
    \begin{equation}
        p_2=\underset{p}{argmax}\,p\int_{p}^{\bar{p}_2}\mu(\bar{p})d\bar{p},\quad
        p>\bar{p}_{1,+}.
\end{equation}
or
\begin{equation}
        p_2=\underset{p}{argmax}\,p\int_{\bar{p}_{1,+}}^{\bar{p}_2}\mu(\bar{p})d\bar{p},\quad
        \bar{p}_{1,-}<p\leq\bar{p}_{1,+}.
\end{equation}
\end{itemize}
We rename the market allocation of $T_1$ and $T_2$ as $[\bar{p}_{1,-},\bar{p}_{1,+})$ and $[\bar{p}_{2,-},\bar{p}_{2,+})$.
\begin{rem}
    We give a brief explanation to show that the other situations are impossible. We have
    \begin{itemize}
        \item If $\bar{p}_{2,-}<\bar{p}_{2,+}\leq \bar{p}_{1,-}<\bar{p}_{1,+}$, then we have $b_2(p_2,\bar{p}_{1,-})>b_1(p_1,\bar{p}_{1,-})$ which contradicts the fact that $\bar{p}_{1,-}$ is in the market allocation of $T_1$.
        \item If the market allocation of $T_1$ is not connected, then there exists $p'_1$, $p'_2$ and $p''_1$ where $0<b_1(p_1,p'_1)<b_2(p_2,p'_2)$ and $b_2(p_2,p'_2)<b_1(p_1,p''_1)$. It leads to a contradiction that $b_1$ and $b_2$ are second-type accuracy-compatible. The situation of $T_2$ is similar. 
    \end{itemize}
\end{rem}

\begin{dee}
We define $i$-th enveloping quasi-dual utility function of the data chain $s_1\to s_2\to\cdots\to s_i$ as
    \begin{equation}
        \mathfrak{b}_i(\bar{p})=\begin{cases}
b_j(p_j,\bar{p}) & \text{if}\quad \bar{p}\in[\bar{p}_{j,-},\bar{p}_{j,+}), \\
0 & \text{otherwise.}
\end{cases}
    \end{equation}
\end{dee}

The situation with arbitrary chain lengths can be generalized from the primal situation and the quasi-dual situation with chain length of $2$ combined. We omit the details of this part.

\begin{figure}
    \centering
    \begin{tikzpicture}
    \draw[->] (0,0) -- (10,0);
    \node[anchor=east, yshift=5pt] at (12,0) {price};
    \filldraw (2,0) circle (1pt) node[above=3pt] {$p_2$};
    \filldraw (7,0) circle (1pt) node[above=3pt] {$p_1$};
    \filldraw (2,0) circle (1pt) node[below=3pt] {$T_2$(with Accuracy $A_2$)};
    \filldraw (7,0) circle (1pt) node[below=3pt] {$T_1$(with Accuracy $A_1$)};
    \filldraw (10,0) circle (1pt) node[above=3pt]
    {$0$};
    \filldraw (10,0) circle (1pt) node[below=3pt] {buyer};
\end{tikzpicture}
    \caption{Dual generalized Hotelling's law}
    \label{dualdatahotelling}
\end{figure}

\subsection{Discussion on the data chains among four dimensions}

In this section, we have conducted various forms of duality transformations on the primal situation, specifically by interchanging the constraint conditions associated with the two variables of accuracy and price to different extents. This includes fully swapping all constraint conditions between these two variables, as well as partially swapping some conditions while keeping others unchanged, thereby deriving several distinct 'dual' forms, see Table \ref{tab:4situ}. Note that the fourth scenario is not described in this paper. However, its properties and the method for constructing its data chain can be derived analogously.

\begin{table}%
	\caption{Situation among four dimensions}
	\label{tab:4situ}
	\begin{minipage}{\columnwidth}
		\begin{center}
			\begin{tabular}{lllll}
				\toprule
				Situations   &  Optimized object & \makecell{Known market\\ distribution} & \makecell{Index of\\ utility functions} & \makecell{Variables of\\ utility functions}\\
				\toprule
                Primal   &  Accuracy & Accuracy & Accuracy & (Price,Accuracy)\\
                Dual   &  Price & Price & Price & (Accuracy,Price)\\
                Quasi-dual   &  Accuracy & Price & Accuracy & (Price,Price)\\
                Ultra-dual   &  Price & Accuracy & Price & (Accuracy,Accuracy)\\
				\bottomrule
			\end{tabular}
		\end{center}
		\bigskip\centering
	\end{minipage}
\end{table}%

\section{Dynamic pricing strategy: chain-like Nash equilibrium}
In the previous sections, we consider the static strategy where
\begin{itemize}
    \item  The goal of pricing $T_i$ is to maximize the overall profit of $T_1,T_2,\cdots,T_i$.
    \item  Once $T_i$ has completed the pricing, $T_1,T_2,\cdots, T_{i-1}$ do not adjust their prices further. 
\end{itemize}
In this section, we consider a different scenario named dynamic pricing strategy and
\begin{itemize}
    \item The goal of pricing $T_i$ is to maximize the profit of $T_i$.
    \item The pricing strategy for $T_i$ is based on its adjacent models, $T_{i-1}$ and $T_{i+1}$.
\end{itemize}
When the chain length is $1$, the scenario is equivalent to a static pricing strategy. For the case where the chain length is $2$, we use the idea of bilateral game theory in data markets from \cite{d2012game}, which states that a Nash equilibrium exists when the maximum points of the corresponding best response functions for both players coincide. We consider the scenario where the chain length is $2$ and subsequently generalize it. Note that we incorporate an additional assumption, namely, that each model will always be trained and sold regardless of whether the profit is positive.

\subsection{Two nodes example}
When considering the scenario with two models $T_1$ and $T_2$, we note that whether competition arises between the two models depends on the choice of $T_2$. We have
\begin{itemize}
    \item {\bf In $T_1$'s perspective} If $T_2$ competes with $T_1$, i.e. $b_2(p_2, A_1) > 0$, then the optimal pricing strategy for $T_1$ is, let $b_2(p_2,a^{\ast,+}_1)=b_1(p,a_1^{\ast,+})$ and $b_1(p,a'_1)=0$,
\begin{equation}
    p_1=\underset{p}{argmax}\, p\int_{a'_1}^{a^{\ast,+}_1}\lambda(a)da.
\end{equation}

If there is no competition between the two models, then the scenario for $T_1$ is the same as \eqref{i1case}.

    \item {\bf In $T_2$'s perspective} For $T_2$, it makes choice to compete with $T_1$ or not. Therefore, we calculate the maximum values in two different scenarios separately to derive the optimal pricing strategy for $T_2$. If $T_2$ competes with $T_1$, we have, let $b_1(p_1,a_2^{\ast,-})=b_2(p,a_2^{\ast,-})$,
    \begin{equation}
        p_2=\underset{p}{argmax}\, p\int_{a^{\ast,-}_2}^{A_2}\lambda(a)da.
    \end{equation}
    If $T_2$ does not compete with $T_1$, i.e. $b_2(p,A_1)\leq0$, we have
    \begin{equation}
        p_2=\underset{p}{argmax}\, p\int_{a'_2}^{A_2}\lambda(a)da.
    \end{equation}
\end{itemize}
If there exist $p_1$ and $p_2$ such that they are respectively the optimal pricing strategies for $T_1$ and $T_2$, then a Nash equilibrium exists.

\subsection{General situation}
We are now considering the pricing strategy for a data chain of length $n$. The pricing strategy for $T_1$ is given, and the pricing strategy of $T_n$ is similar. For $T_i$, we consider the left and right endpoints of its market allocation interval, noting that the values of these endpoints essentially stem from the competitive game between $T_i$ and its neighboring models $T_{i-1}$ and $T_{i+1}$. We have
\begin{itemize}
    \item {\bf The competitive game between $T_{i-1}$ and $T_i$}  In this case, $T_i$ makes choice to compete with $T_{i-1}$ or not. Therefore, we calculate the maximum values in two different scenarios separately. If $T_i$ competes with $T_{i-1}$, we have, let $b_{i-1}(p_{i-1},a_{i}^{\ast,-})=b_{i}(p,a_{i}^{\ast,-})$,
    \begin{equation}
        p_i=\underset{p}{argmax}\,p\int_{a_{i}^{\ast,-}}^{\ast}\lambda(a)da
    \end{equation}
    If $T_i$ does not compete with $T_{i-1}$, i.e. $b_2(p_2,A_1)\leq0$, we have
    \begin{equation}
        p_i=\underset{p}{argmax}\,p\int_{a'_i}^{\ast}\lambda(a)da
    \end{equation}
    \item {\bf The competitive game between $T_i$ and $T_{i+1}$}  If $T_{i+1}$ competes with $T_i$, then the optimal pricing strategy for $T_i$ is, let $b_{i+1}(p_{i+1},a^{\ast,+}_{i})=b_i(p,a_i^{\ast,+})$, 
\begin{equation}
    p_i=\underset{p}{argmax}\, p\int_{\ast}^{a^{\ast,+}_i}\lambda(a)da
\end{equation}

If there is no competition between the two models, then we have
\begin{equation}
    p_i=\underset{p}{argmax}\, p\int_{\ast}^{A_i}\lambda(a)da
\end{equation}

\end{itemize}

If there exist $p_1,p_2,\cdots,p_n$ such that they are respectively the optimal pricing strategies for $T_i$'s, then a Nash equilibrium exists.

We observe that, regardless of adopting dynamic pricing strategies or static pricing strategies, we only consider the competitive game between adjacent models. This simplified form of the game problem aligns with real-world scenarios.

\begin{rem}
    In this section, we omit the proof of the connectivity of the market allocation for brevity. It is evident from the proof details of Theorem \ref{thm:connect} that we did not utilize the condition that $p_i$ is the optimal price under a static strategy. This implies that, regardless of the prices arbitrarily assigned, the final market allocation is always connected. Consequently, the connectivity of the market allocation also holds in this section.
\end{rem}

\section{Applications: separable utility function}
In this section, we discuss a special type of utility function, namely the separable utility function, and present the possible values for $p_i$'s in such a scenario. We have

\begin{dee}
   \cite{chen2022selling}If $b_i(p,a)=f_i(p)h_i(a)$ or $b_i(p,a)=f_i(p)+h_i(a)$ respectively, then $b_i$ is  multiplicatively separable, or additively separable respectively where $f_i$ is a function of $p$ and $h_i$ is a function of $a$.
\end{dee}

    Note that in our paper, the additive separable utility function and the multiplicative separable utility function can be transformed into each other through exponential or logarithmic transformations. Without loss of generality, we only consider the additive separable scenario. We assume
    $b_i(p,a)=f_i(p)+h_i(a)$ where $f_i$ and $h_i$ are differentiable.

Note that the revenue function $r_i(p)$ is continuous almost everywhere, implying that the maximum value is attained only when the price $p$ takes values at the endpoints or stationary points. The computation of revenue corresponding to the endpoints is straightforward. Moreover, the conclusion drawn in this section is that, under the utility functions being separable, the expression for the stationary points can be solved using differential equations. The goal of this section is to calculate all the expressions of the stationary points,
\subsection{Uniform distribution}
There is one model sold in the market, then we have
\begin{equation}
    p_1=\underset{p}{argmax}\, p\int_{a'_1}^{A_1}da=\underset{p}{argmax}\, p(A_1-a'_1).
\end{equation}
Now we calculate the stationary point and we have $b_1(p,a'_1)=f_1(p)+h_1(a'_1)=0$. It leads to $a'_1=h_1^{-1}(-f_1(p))$, for which $a'_1$ is a function of price $p$ and we denote $a'_1(\eta_1)=\xi_1$. We have
\begin{equation}
    (A_1-a'_1(p))-p\frac{d a'_1(p)}{dp}=0,
\end{equation}
and then $p$ is the solution of the following equations
\begin{equation}
\begin{aligned}
    a'_1=h_1^{-1}(-f_1(p)),\\
    a'_1=A_1+\frac{\eta_1(\xi_1-A_1)}{p}.
\end{aligned}
\end{equation}

In this case, the stationary point of $r_1$ is the maximum point.

In the general case, i.e. $i\geq2$, we have
\begin{itemize}
    \item For $a'_i\in(a_{k-1,+},a_{k,-}]$ where $a'_i=h_i^{-1}(-f_i(p))$, we have $b_i(f_i^{-1}(-h_i(a'_i)),a)>\mathfrak{b}_{i-1}(a)$ for $a\in(a'_i,A_{i-1}]$. We denote $a'_i(\eta_i)=\xi_i$ and we have
    \begin{equation}
        p_i=\underset{p}{argmax}\,p(A_i-a'_i),\quad
        a_{k-1,+}<a'_i\leq a_{k,-}.
\end{equation}
We calculate the stationary point and solve
\begin{equation}
    (A_i-a'_i)-p\frac{d a'_i(p)}{dp}=0.
\end{equation}
So the stationary point $p$ is the solution of the following equations
\begin{equation}
    \begin{aligned}
        a'_i=A_i+\frac{\eta_i(\xi_i-A_i)}{p},\\
        a'_i=h_i^{-1}(-f_i(p)).
    \end{aligned}
\end{equation}
Also, in the case of \eqref{aiai+1case}, we have $p_i=f_i^{-1}(h_i(A_j))$.
    \item Let $a'_i\in(a_{k,-},a_{k,+}]$ and $b_i(p,a_i^\ast)=\mathfrak{b}_{i-1}(a_i^\ast)=b_j(p_j,a_i^\ast)$ where $a_i^\ast\in(a_{j,-},a_{j,+}]$. In this case, we have
    \begin{equation}
        p_i=\underset{p}{argmax}\,p(A_i-a_i^\ast)+p_j(a_i^\ast-a_{j,-}),\quad
        a_{k,-}<a'_i\leq a_{k,+},\quad a_{j,-}<a_i^\ast\leq a_{j,+}
\end{equation}
where
    \begin{equation}
        a_i^\ast=(h_i-h_j)^{-1}(f_j(p_j)-f_i(p)).
    \end{equation}
    We solve, denote $a_i^\ast(\psi_i)=\phi_i$,
\begin{equation}
    A_i-a_i^\ast-(p-p_j)\frac{da_i^\ast}{dp}=0.
\end{equation} 
The stationary point $p$ is the solution of the following equations
\begin{equation}
    \begin{aligned}
        a_i^\ast=A_i+\frac{\phi_i-A_i}{(\psi_i-p_j)(p-p_j)},\\
        a_i^\ast=(h_i-h_j)^{-1}(f_j(p_j)-f_i(p)).
    \end{aligned}
\end{equation}
    \item If $a'_i\in(A_{i-1},A_i]$, then we have
\begin{equation}
        p_i=\underset{p}{argmax}\,p(A_i-a'_i)da,\quad
        A_{i-1}<a'_i\leq A_i,
\end{equation}
and
\begin{equation}
    (A_i-a'_i(p))-p\frac{d a'_i(p)}{dp}=0.
\end{equation}
It leads to $p$ is the solution of the following equations
\begin{equation}
\begin{aligned}
    a'_i=h_i^{-1}(-f_i(p)),\\
    a'_i=A_i+\frac{\eta_i(\xi_i-A_i)}{p}.
\end{aligned}
\end{equation}
\end{itemize}

\subsection{Continuous distribution}
We define the cumulative distribution function (CDF) of $\lambda(a)$ as $F(a)$. There is one model sold in the market, then we have
\begin{equation}
    p_1=\underset{p}{argmax} p\int_{a'_1}^{A_1}\lambda(a)da=\underset{p}{argmax} p(F(A_1)-F(a'_1))
\end{equation}
Now we calculate the stationary point, where it is also the maximum point. We have
\begin{equation}
    (F(A_1)-F(a'_1))-p\frac{d F(a'_1)}{dp}=0.
\end{equation}
then $p$ is the solution of the following equations
\begin{equation}
\begin{aligned}
    a'_1=h_1^{-1}(-f_1(p)),\\
    a'_1=F^{-1}(F(A_1)+\frac{\eta_1(\xi_1-A_1)}{p}).
\end{aligned}
\end{equation}

In the general case, i.e. $i\geq2$, we have
\begin{itemize}
    \item If $a'_i\in(a_{k-1,+},a_{k,-}]$ and $b_i(f_i^{-1}(-h_i(a'_i)),a)>\mathfrak{b}_{i-1}(a)$ in $(a'_i,A_{i-1}]$, then, similarly, we calculate the stationary point of \eqref{geneicase1} and solve
\begin{equation}
    (F(A_i)-F(a'_i))-p\frac{d F(a'_i)}{dp}=0.
\end{equation}
It leads to
\begin{equation}
\begin{aligned}
    a'_i=h_i^{-1}(-f_i(p)),\\
    a'_i=F^{-1}(F(A_i)+\frac{\eta_i(\xi_i-A_i)}{p}).
\end{aligned}
\end{equation}
Also, in the case of \eqref{aiai+1case}, we have $p_i=f_i^{-1}(h_i(A_j))$.
    \item If $a'_i\in(a_{k,-},a_{k,+}]$ and $b_i(p,a_i^\ast)=\mathfrak{b}_{i-1}(a_i^\ast)$ where $a_i^\ast\in(a_{j,-},a_{j,+}]$, then we have 
    \begin{equation}
        a_i^\ast=(h_i-h_j)^{-1}(f_j(p_j)-f_i(p)).
    \end{equation}
    We calculate the stationary point of \eqref{geneicase2}, and 
\begin{equation}
    \begin{aligned}
        a_i^\ast=F^{-1}(F(A_i)+\frac{\phi_i-A_i}{(\psi_i-p_j)(p-p_j)}),\\
        a_i^\ast=(h_i-h_j)^{-1}(f_j(p_j)-f_i(p)).
    \end{aligned}
\end{equation}
    
    \item  
If $a'_i\in(A_{i-1},A_i]$, then we calculate the stationary point of \eqref{geneicase4} and
\begin{equation}
\begin{aligned}
    a'_i=h_i^{-1}(-f_i(p)),\\
    a'_i=F^{-1}(F(A_i)+\frac{\eta_i(\xi_i-A_i)}{p}).
\end{aligned}
\end{equation}

\end{itemize}

\section{Incomplete information game}\label{sec:incinf}
We consider incomplete information games, where a discrepancy exists between the seller's known utility functions and the true utility functions of the buyers. The goal of this section is to demonstrate that our method remains robust under such errors and provide an error analysis for the revenue of a single model. We recall the notation in Section \ref{sec:stathote}, and $b_i$'s are the utility functions known to the seller. We denote $\mathbf{b}_i$'s are the true utility functions in the market corresponding to the cost 
$c_i$, and $\Lambda=\sup\,\lambda(a)$. We present the following assumptions:
\begin{itemize}
    \item
    For the utility functions known to the seller, we assume
    \begin{equation}
    \alpha_{i,p}|a-a'|\leq|b_i(p,a)-b_i(p,a')|\leq \beta_{i,p}|a-a'|
\end{equation}
where $0<\alpha_1<\beta_1<\alpha_2<\beta_2<\cdots<\alpha_n<\beta_n$. We use a slight abuse of notation and denote $\alpha_{i,p}=\alpha_i$ and $\beta_{i,p}=\beta_i$ if it does not introduce ambiguity.
\item 
The true utility function $\mathbf{b}_i$ are also continuous and accuracy-compatible. For the errors between the utility functions known to the seller and the true functions, we assume
\begin{equation}
    |\mathbf{b}_i(p,a)-b_i(p,a)|<\epsilon_i(p).
\end{equation}
\item We assume, for $i<j$ and $p<p'$,
\begin{equation}
    0>\underset{a\to\underline{A}}{\lim}{b_i(p,a)}>\underset{a\to\underline{A}}{\lim}{b_j(p',a)},\quad 0>\underset{a\to\underline{A}}{\lim}{\mathbf{b}_i(p,a)}>\underset{a\to\underline{A}}{\lim}{\mathbf{b}_i(p',a)},
\end{equation}
\end{itemize}

We recall Section \ref{sec:stathote} and the corresponding optimal revenue of $T_i$ is
\begin{equation}
    r_i=p_i\int_{a_{i,-}}^{a_{i,+}}\lambda(a)da
\end{equation}
where $r_i-c_i>0$. Note that in this case, we have $a_{i,-}\in\{a'_i,a_i^\ast\}$ and $a_{i,+}\in\{A_i,a_{i+1}^\ast\}$ if $a_i^\ast$ and $a_{i+1}^\ast$ are attainable.

Now we calculate the true revenue. We denote $\mathbf{b}_i(p_i,\mathbf{a}'_i)=0$ and $\mathbf{b}_j(p_j,\mathbf{a}_j^\ast)=\mathbf{b}_i(p_i,\mathbf{a}_j^\ast)>0$ if $\mathbf{a}_j^\ast$ is attainable, then
\begin{equation}
    \mathbf{r}_i=\max\{p_i\int_{\mathbf{a}_{i,-}}^{\mathbf{a}_{i,+}}\lambda(a)da,0\}
\end{equation}
where
\begin{equation}
    \mathbf{a}_{i,-}=\max\{\mathbf{a}'_i,\mathbf{a}_j^\ast,j=1,2,\ldots,i-1\},\quad \mathbf{a}_{i,+}=\min\{A_i,\mathbf{a}_j^\ast,j=i+1,i+2,\ldots,n\}.
\end{equation}
It leads to
\begin{equation}
    |r_i-\mathbf{r}_i|\leq p_i\Lambda(|a_{i,-}-\mathbf{a}_{i,-}|+|a_{i,+}-\mathbf{a}_{i,+}|).
\end{equation}

We calculate the error bound through classified discussion.
\begin{itemize}
    \item If $a_{i,-}=a'_i$ and $\mathbf{a}_{i,-}=\mathbf{a}'_i$, then we have
    \begin{equation}
        |a'_i-\mathbf{a}'_i|\leq\frac{1}{\alpha_i}|b_i(p_i,a'_i)-b_i(p_i,\mathbf{a}'_i)|=\frac{1}{\alpha_i}|\mathbf{b}_i(p_i,\mathbf{a}'_i)-b_i(p_i,\mathbf{a}'_i)|<\frac{\epsilon_i(p_i)}{\alpha_i}.
    \end{equation}
    \item If $a_{i,-}=a_i^\ast$ and $\mathbf{a}_{i,-}=\mathbf{a}_{i-1}^\ast$, then
we recall $B_{i,i-1}(a)=b_i(p_i,a)-b_{i-1}(p_{i-1},a)$ and let $\mathbf{B}_{i,i-1}(a)=\mathbf{b}_i(p_i,a)-\mathbf{b}_{i-1}(p_{i-1},a)$. So $B_{i,i-1}(a_i^\ast)=\mathbf{B}_{i,i-1}(\mathbf{a}_{i-1}^\ast)=0$. We have
\begin{equation}
    |B_{i,i-1}(\mathbf{a}_i^\ast)-\mathbf{B}_{i,i-1}(\mathbf{a}_i^\ast)|\leq|b_i(p_i,\mathbf{a}_i^\ast)-\mathbf{b}_i(p_i,\mathbf{a}_i^\ast)|+|b_j(p_j,\mathbf{a}_i^\ast)-\mathbf{b}_j(p_j,\mathbf{a}_i^\ast)|<\epsilon_i(p_i)+\epsilon_j(p_j)
\end{equation}
and
\begin{multline}
    |B_{i,i-1}(\mathbf{a}_{i-1}^\ast)-\mathbf{B}_{i,i-1}(\mathbf{a}_{i-1}^\ast)|=|B_{i,i-1}(\mathbf{a}_{i-1}^\ast)-B_{i,i-1}(a_i^\ast)|\\
    =|b_i(p_i,\mathbf{a}_i^\ast)-b_{i-1}(p_{i-1},\mathbf{a}_i^\ast)-b_i(p_i,a_i^\ast)+b_{i-1}(p_{i-1},a_i^\ast)|\\ 
    \geq ||b_i(p_i,\mathbf{a}_i^\ast)-b_i(p_i,a_i^\ast)|-|b_{i-1}(p_{i-1},\mathbf{a}_i^\ast)-b_{i-1}(p_{i-1},a_i^\ast)||\geq(\alpha_i-\beta_{i-1})|\mathbf{a}_{i-1}^\ast-a_{i-1}^\ast|
\end{multline}
It leads to
\begin{equation}\label{Berrboun}
    |a_i^\ast-\mathbf{a}_{i-1}^\ast|<\frac{\epsilon_i(p_i)+\epsilon_j(p_j)}{\alpha_i-\beta_{i-1}}
\end{equation}

\item If $a_{i,-}=a'_i$ and $\mathbf{a}_{i,-}=\mathbf{a}_j^\ast$ where $j\neq i-1$, then $a'_i\geq A_j\geq \mathbf{a}_j^\ast$ and $\mathbf{a}_j^\ast\geq\mathbf{a}'_i$ due to the the monotonicity of $\mathbf{b}_j$. So $|a'_i-\mathbf{a}_j^\ast|\leq|a'_i-\mathbf{a}'_i|<\frac{\epsilon_i(p_i)}{\alpha_i}$.

\item If $a_{i,-}=a_i^{\ast}$, $\mathbf{a}_{i,-}=\mathbf{a}_j^{\ast}$ where $j\neq i-1$ and $b_i(p_i,a_j^\ast)=b_j(p_j,a_j^\ast)$ is attainable, then we have $a_i^\ast>a_j^\ast$ and $\mathbf{a}_j^\ast\geq\mathbf{a}_{i-1}^\ast$. It leads to
\begin{equation}\label{Bijerrboun}
    |a_i^\ast-\mathbf{a}_j^\ast|\leq \max\{|a_i^\ast-\mathbf{a}_{i-1}^\ast|,|a_j^\ast-\mathbf{a}_j^\ast|\}\leq\max\{\frac{\epsilon_i(p_i)+\epsilon_{i-1}(p_{i-1})}{\alpha_i-\beta_{i-1}},\frac{\epsilon_i(p_i)+\epsilon_j(p_j)}{\alpha_i-\beta_j}\}.
\end{equation}
If $a_j^\ast$ is not attainable, then $a_i^\ast>A_j>\mathbf{a}_j^\ast\geq \mathbf{a}_{i-1}^\ast$. It leads to
\begin{equation}
    |a_i^\ast-\mathbf{a}_j^\ast|< |a_i^\ast-\mathbf{a}_{i-1}^\ast| <\frac{\epsilon_i(p_i)+\epsilon_{i-1}(p_{i-1})}{\alpha_i-\beta_{i-1}}
\end{equation}
\item If $a_{i,-}=a_i^\ast$ and $\mathbf{a}=\mathbf{a}'_i$, then we have, if  $\mathbf{a}'_i\geq A_{i-1}$, then $\mathbf{a}'_i\geq A_{i-1}\geq a_i^\ast>a'_i$ and
\begin{equation}
    |a_i^\ast-\mathbf{a}'_i|<|a'_i-\mathbf{a}'_i|<\frac{\epsilon_i(p_i)}{\alpha_i}.
\end{equation}
If $\mathbf{a}'_i< A_{i-1}$, then $\mathbf{a}_i^\ast$ is attainable due to $\mathbf{b}_{i-1}(p_{i-1},A_{i-1})\leq\mathbf{b}_i(p_i,A_{i-1})$.
It leads to
\begin{equation}
    |\mathbf{a}'_i-a_i^\ast|\leq \max\{|a_i^\ast-\mathbf{a}_{i-1}^\ast|,|a'_i-\mathbf{a}'_i|\}\leq\max\{\frac{\epsilon_i(p_i)+\epsilon_{i-1}(p_{i-1})}{\alpha_i-\beta_{i-1}},\frac{\epsilon_i(p_i)}{\alpha_i}\}.
\end{equation}
    \item If $a_{i,+}=a_{i+1}^\ast$ and $\mathbf{a}_{i,+}=\mathbf{a}_{i+1}^\ast$, then we have, similar to \eqref{Berrboun},
    \begin{equation}
    |a_i^\ast-\mathbf{a}_{i+1}^\ast|<\frac{\epsilon_i(p_i)+\epsilon_{i+1}(p_{i+1})}{\beta_{i+1}-\alpha_i}
\end{equation}
    \item If $a_{i,+}=a_{i+1}^\ast$ and $\mathbf{a}_{i,+}=\mathbf{a}_j^\ast$ where $j\neq i+1$, then similar to \eqref{Bijerrboun}, we have
    \begin{equation}
    |a_i^\ast-\mathbf{a}_j^\ast|\leq \max\{|a_i^\ast-\mathbf{a}_{i+1}^\ast|,|a_j^\ast-\mathbf{a}_j^\ast|\}\leq\max\{\frac{\epsilon_i(p_i)+\epsilon_{i+1}(p_{i+1})}{\beta_{i+1}-\alpha_i},\frac{\epsilon_i(p_i)+\epsilon_j(p_j)}{\beta_j-\alpha_i}\}
\end{equation}
if $a_j^\ast$ is attainable, and
\begin{equation}
    |a_i^\ast-\mathbf{a}_j^\ast|< |a_i^\ast-\mathbf{a}_{i-1}^\ast| <\frac{\epsilon_i(p_i)+\epsilon_{i+1}(p_{i+1})}{\beta_{i+1}-\alpha_i}
\end{equation}
if $a_j^\ast$ is not attainable.
    \item If $a_{i,+}=A_i$ and $\mathbf{a}_{i,+}=\mathbf{a}_j^\ast$, then we have, note that $B_{i,j}(A_i)>0$, $\mathbf{B}_{i,j}(A_i)\leq 0$, and $B_{i,j}(\mathbf{a}_j^\ast)>0$,
    \begin{equation}
    |B_{i,j}(A_i)-\mathbf{B}_{i,j}(A_i)|\geq|B_{i,j}(A_i)-B_{i,j}(\mathbf{a}_i^\ast)|\geq(\beta_{j}-\alpha_i)|A_i-\mathbf{a}_i^\ast|
\end{equation}
It leads to
\begin{equation}
    |A_i-\mathbf{a}_i^\ast|\leq\frac{\epsilon_i(p_i)+\epsilon_{i+1}(p_{j})}{\beta_{j}-\alpha_i}. 
\end{equation}
    \item If $a=a_i^\ast$ and $\mathbf{a}=A_i$, then we have, note that $B_{i.i+1}(A_i)<0$, $\mathbf{B}_{i,i+1}(A_i)>0$, and $B_{i,i+1}(a_i^\ast)=0$,
    \begin{equation}
    |B_{i,i+1}(A_i)-\mathbf{B}_{i,i+1}(A_i)|\geq|B_{i,i+1}(A_i)-B_{i,i+1}(a_i^\ast)|\geq(\beta_{i+1}-\alpha_i)|A_i-a_i^\ast|
\end{equation}
It leads to
\begin{equation}
    |A_i-a_i^\ast|\leq\frac{\epsilon_i(p_i)+\epsilon_{i+1}(p_{i+1})}{\beta_{i+1}-\alpha_i}. 
\end{equation}
\end{itemize}

\section{Conclusions}
We study to trade models instead of datasets themselves in this paper, and we have introduced a novel transaction mechanism in the data market. Given the reproducibility of data, we train the data multiple times to produce models of varying accuracy levels, catering to diverse market demands. We employ the generalized Hotelling's law to describe the market preferences for these models and the competitive scenario. Based on this scenario, we select the most appropriate cost allocation strategy to ensure maximum profit maximization.


%
%
%
%
%

\bibliographystyle{ACM-Reference-Format}
\bibliography{sample-bibliography}

\appendix

\end{document}